\newif\iftcs
\newcommand{\knote}[1]{\textcolor{red}{$\ll$\textsf{#1 --Aryeh}$\gg$\marginpar{\tiny\bf AK}}}
\newcommand{\X}{\mathcal{X}}
\newcommand{\htil}{{\tilde h}}
\newcommand{\hemp}{{h^*}}
\newcommand{\Ltil}{{\tilde L}}
\newcommand{\Lbar}{{\bar L}}
\newcommand{\etbar}{{\bar\eta}}
\newcommand{\ettil}{{\tilde\eta}}
\newcommand{\etstr}{{\eta^*}}
\renewcommand{\hbar}{{\bar h}}
\newcommand{\G}{\mathcal{G}}
\newcommand{\F}{\mathcal{F}}
\renewcommand{\H}{\mathcal{H}}
\def\eps{\varepsilon}
\newcommand{\E}{\mathbb{E}}
\newcommand{\M}{\mathbb{M}}
\renewcommand{\P}{\mathbb{P}}
\newcommand{\ddim}{\mathrm{ddim}}
\renewcommand{\diam}{\mathrm{diam}}
\newcommand{\tsabs}[1]{| #1 |}
\newcommand{\wh}{\widehat}
\newcommand{\Lip}[1]{\nrm{#1}_{\textrm{{\tiny \textup{Lip}}}}}
\newcommand{\Liprho}[2]{\nrm{#1}_{\textrm{{\tiny \textup{Lip(\ensuremath{#2})}}}}}
\newcommand{\tsLip}[1]{\tsnrm{#1}_{\textrm{{\tiny \textup{Lip}}}}}
\newcommand{\tsnrm}[1]{\| #1 \|}
\newcommand{\citet}[1]{\cite{#1}}
\newcommand{\citep}[1]{\cite{#1}}
\newcommand{\rad}{\wh{\mathcal{R}}}
\newcommand{\risk}{R}
\newcommand{\distr}{\mu}
\newcommand{\turb}[1]{\left\llbracket #1 \right\rrbracket}
\newcommand{\qloss}[1]{q\circ{#1}}
\newcommand{\logtwo}{\log_2}
\newcommand{\lognat}{\ln}
\newtheorem{theorem}{Theorem}[section]
\newtheorem{theorem}{Theorem}
\newtheorem{lemma}[theorem]{Lemma}
\newtheorem{corollary}[theorem]{Corollary}
\renewenvironment{proof}{{\bf Proof:\ }}{\hfill$\Box$\medskip}
\begin{document}

\title{Efficient Regression in Metric Spaces\\ via Approximate Lipschitz Extension
\footnote{A preliminary version of this paper appeared 
in Proceedings of SIMBAD 2013 conference \cite{DBLP:conf/simbad/GottliebKK13}.}
}

\author{
Lee-Ad Gottlieb\thanks{Ariel University, \texttt{leead@ariel.ac.il}.
}
and
Aryeh Kontorovich\thanks{Ben-Gurion University, \texttt{karyeh@cs.bgu.ac.il}.
This research was partially supported by the Israel Science Foundation 
(grant \#1141/12)
and a Yahoo Faculty award.
}
and
Robert Krauthgamer\thanks{Weizmann Institute of Science, 
\texttt{robert.krauthgamer@weizmann.ac.il}. 
This work was supported in part by the Israel Science Foundation
(grants \#452/08 and \#897/13), and by a Minerva grant.}
}

\maketitle

\begin{abstract}
We present a framework for performing efficient regression in general 
metric spaces. 
Roughly speaking, our regressor predicts the value at a new point
by computing an approximate Lipschitz extension ---
the smoothest function consistent with the observed data --- 
after
performing 
structural risk minimization
to avoid overfitting.
We obtain
finite-sample risk bounds
with
minimal structural and noise assumptions,
and 
a natural runtime-precision tradeoff.
The offline (learning) and online (prediction) stages can be solved 
by convex programming, but this naive approach has runtime complexity $O(n^3)$,
which is prohibitive for large datasets.
We design instead a regression algorithm 
whose speed and generalization performance depend on the intrinsic
dimension of the data, to which the algorithm adapts.
While our main innovation is algorithmic, the statistical results may also be of independent interest.
\end{abstract}

\section{Introduction}
\label{sec:intro}
The classical problem of estimating a continuous-valued function from noisy observations, known as {\em regression},
is of central importance in statistical theory with a broad range of applications, see e.g. 
\cite{MR740865,MR726392,MR1093466,MR1161622,MR1920390}.
When no structural assumptions concerning the target function are made, the regression problem is termed 
{\em nonparametric}.
Informally, the main objective in the study of nonparametric regression is to understand the 
relationship between the regularity conditions that a function class might satisfy 
(e.g., Lipschitz or H\"older continuity, or sparsity in some representation) and 
the minimax risk convergence rates 
\cite{MR2013911,MR2172729}.
A further consideration is the computational efficiency of constructing the regression function.

The general (univariate) nonparametric regression problem may be 
stated as follows.
Let $(\X,\rho)$ 
be a metric space, 
and
let $\H$ be
a collection 
of functions (``hypotheses'') $h:\X\to[0,1]$.
(Although in general, $h$ need not be restricted to a bounded range,
typical assumptions on the diameter of $\X$ and the noise distribution amount to an effective
truncation \cite{DBLP:conf/icml/kon14,MeirZhang2003}.)
The space $\X\times[0,1]$ is endowed with some fixed, unknown probability distribution $\distr$,
and the learner observes $n$ iid draws $(X_i,Y_i)\sim \distr$. The learner then seeks to fit the observed data
with some hypothesis $h\in\H$ so as to minimize the  
{\em risk}, usually defined as the expected loss $\E \abs{h(X)-Y}^q$ 
for $(X,Y)\sim\distr$ and some $q\ge1$. This 
is known 
in machine learning theory
as the {\em agnostic} setting.
The agnostic setting is considerably more general than the additive
(typically Gaussian) noise model prevalent in statistics
(see \cite{DBLP:journals/corr/RakhlinST13} for a recent
paper on agnostic regression).

We consider two kinds of risk: $L_1$ (mean absolute) and 
$L_2$ (mean square). More precisely,
for $q\in\set{1,2}$
we associate
to each hypothesis $h\in\H$
the empirical $L_q$-risk
\beqn
\label{eq:emprisk}
\risk_n(h,q) = 
\oo n\sum_{i=1}^n \abs{h(X_i)-Y_i}^q ,
\eeqn
and the (expected) $L_q$-risk
\beqn
\label{eq:exprisk}
\risk(h,q) = 
\E \abs{h(X)-Y}^q
=\int_{\X\times[0,1]} \abs{h(x)-y}^q d\distr(x,y).
\eeqn

It is well-known that $h(x)=\M[Y\gn X=x]$ (where $\M$ is a median) 
minimizes 
$\risk(\cdot,1)$ 
over all 
integrable
$h\in[0,1]^\X$
and $h(x)=\E[Y\gn X=x]$ minimizes $\risk(\cdot,2)$.
However, these characterizations are of little practical use as neither is computable 
without knowledge of $\distr$.
Hence, the standard route is to minimize the regularized empirical risk
and provide generalization bounds for this procedure.
A naive implementation of this approach requires solving a linear (or quadratic) program,
which incurs a prohibitive $O(n^3)$ time complexity.

\paragraph*{Our contribution}
\label{par:contrib}
Our approach to the regression problem departs from that of classical statistics
in several important ways.
Statistics has traditionally been more concerned with establishing
minimax risk rates
than with the computational efficiency (or even explicit construction) of the regression
procedure.
In contradistinction, our framework involves a precision parameter $\eta$,
which controls the runtime-precision tradeoff. In particular,
this means
that
Bayes-consistency is not achievable
for $\eta>0$.
Further, our results rely on the structure of the metric space, 
but only to the extent of assuming that it
has a low ``intrinsic'' dimensionality.
Specifically, we consider the doubling dimension of $\X$, denoted $\ddim(\X)$,
which was introduced by \cite{DBLP:conf/focs/GuptaKL03}
based on earlier work of \cite{Assouad83,Clarkson99},
and has been since utilized in several algorithmic contexts,
including networking, combinatorial optimization, and similarity search,
see e.g. \cite{KSW09,Talwar04,KL04,BKL06,HM06,CG06,Clarkson06}.
(A formal definition and
typical
examples appear in Section \ref{sec:tech}.)
Following the work of \cite{DBLP:journals/tit/GottliebKK14}
on classification problems, 
our risk bounds and algorithmic runtime bounds are
stated in terms of the doubling dimension 
of the data space
and the Lipschitz
constant of the regression hypothesis, 
although neither of these quantities need
be known in advance.
Note that any continuous function can be uniformly approximated by
Lipschitz functions,
with the Lipschitz constant as a measure of regularity
--- to which our algorithm adapts in a data-dependent fashion. 

Our paper's main contribution is computational. 
The algorithm in Theorem~\ref{thm:risk-minimization} computes
an $\eta$-additive approximation to the Lipschitz-regularized
empirical risk minimizer in time $\eta^{-O(\ddim(\X))}n \lognat^3 n$
(recall $\eta>0$ is a parameter that controls the desired precision). 
By Theorem~\ref{thm:LipExt}, this hypothesis can be evaluated 
on new points in time $\eta^{-O(\ddim(\X))} \lognat n$. 
A novel feature of our construction is the use of a 
spanner to reduce the runtime of a linear program,
and the spanner construction in Appendix~\ref{sec:spanner}
is itself of independent interest, having already been invoked in
\cite{DBLP:conf/stoc/ElkinS13,DBLP:conf/stoc/Solomon14}.
We also present some statistical risk bounds
(culminating in Theorem~\ref{thm:main-risk}).

A simple
no-free-lunch argument shows that it is impossible to learn functions
with arbitrary oscillation, 
and hence
Lipschitzness is a natural and commonly used regularization 
constraint \cite{MR2013911,MR2172729,shwartz2014understanding}.
In this sense, our work fits into the so-called {\em luckiness} paradigm
\cite{DBLP:journals/tit/Shawe-TaylorBWA98},
of which SVM is a classic instance. Rather than guaranteeing a priori Bayes-consistency
or excess risk bounds, luckiness bounds are data-dependent. Thus, in the case of SVM,
a {\em lucky} sample is one that admits a large-margin separator; this in turn allows for
optimistic generalization bounds ---
as opposed to
a less lucky sample with
a smaller margin and correspondingly more pessimistic bounds.
More recently, this data-dependent approach
was applied to general metric spaces
\cite{DBLP:journals/tit/GottliebKK14}
and was later shown to be
Bayes-consistent
\cite{DBLP:conf/aistats/KontorovichW15}.

Our runtime and generalization bounds explicitly depend on the 
doubling dimension of $\X$, 
but as we discuss in Remark~\ref{rem:adapt},
recent results with data-dependent generalization~\cite{GottliebKK13-tcs}
renders our approach adaptive to the {\em intrinsic} dimension of the samples,
offering large savings when the latter is
even moderately smaller
than the 
ambient metric dimension.

\paragraph*{Paper outline} 
We start by defining the basic concepts in Section \ref{sec:tech}. 
Our efficient model selection procedure is described in Section \ref{sec:bv}, 
and the prediction algorithm (for a test point) is described in Section \ref{sec:lipext}. 
The risk guarantees of our method are provided in Section \ref{sec:risk}.

\paragraph*{Related work}
There are many excellent references for classical Euclidean 
nonparametric regression assuming iid noise, 
see for example
\cite{MR1383093,MR1920390}.
For metric regression, a simple risk bound follows from classic VC theory via 
the
pseudo-dimension,
see e.g.~\cite{pollard84,MR1367965,neylon06}.
However, the pseudo-dimension of many
natural
function classes, 
including Lipschitz functions,
is infinite ---
yielding a vacuous
bound.
An approach to nonparametric regression based on empirical risk minimization,
though only for the Euclidean case,
may already be found in \cite{DBLP:journals/tit/LugosiZ95}; see the comprehensive historical overview therein.
Indeed, \cite[Theorem 5.2]{MR1920390} provides a kernel regressor for 
Lipschitz functions that achieves the 
minimax rate. Note however that (a) the setting is restricted to 
Euclidean spaces; and 
(b) the runtime cost of evaluating the hypothesis at a new point grows 
linearly with the sample size 
(while our complexity is roughly logarithmic). 

More recently, risk bounds in terms of doubling dimension and 
Lipschitz constant were given in \cite{NIPS2009_1009}.
These results assumed an additive noise model, 
and hence are incomparable to ours.
Following up, a regression technique based on random 
partition trees was proposed in \cite{Kpotufe2012},
based on mappings between 
Euclidean spaces and also assuming an additive noise model. 
Another recent advance in nonparametric regression was Rodeo \cite{1132.62026}, 
which escapes the curse of dimensionality by adapting to the sparsity
of the regression function. In contrast, our results
apply to general metric spaces and exploit Lipschitz smoothness
rather than sparsity.

Our work was inspired by the paper of von Luxburg and Bousquet 
\cite{DBLP:journals/jmlr/LuxburgB04}, 
who established a connection between 
Lipschitz classifiers in metric spaces and 
large-margin hyperplanes in Banach spaces, thereby providing a 
novel generalization bound for 
nearest-neighbor classifiers. 
They developed a powerful statistical framework whose core idea may be 
summarized as follows: 
to predict the behavior at new points, 
find the smoothest function consistent with the training sample,
and then extend the function to the new points.
Since the regression function is defined implicitly by the labeled sample,
the work of \cite{DBLP:journals/jmlr/LuxburgB04}
raises natural algorithmic issues, such as
efficiently evaluating this function on test points (prediction)
and performing model selection
(Structural Risk Minimization)
to avoid overfitting.
Subsequent work (by the current authors) \cite{DBLP:journals/tit/GottliebKK14} leveraged the doubling dimension 
for both statistical and computational efficiency,
and designed an efficient classifier for doubling metric spaces.
Its key feature is an efficient algorithm to optimize the balance
between the empirical risk and the penalty term for a given input.
The present work extends these techniques from binary classification to real-valued regression, which presents a host of technical challenges.

\section{Technical background}
\label{sec:tech}
We use standard notation and definitions throughout.
The long-standing custom of ignoring measurability issues
in learning-theoretic papers is more than justified in this case:
we (effectively) only consider a class of functions computable
to fixed precision by a fixed algorithm,
and thus no loss of generality is incurred in treating this
set of functions as countable.
We write $\lognat$ for the natural logarithm
and $\log_b$ to specify a different base $b$.

\paragraph*{Metric spaces, Lipschitz constants}
A {\em metric} $\rho$ on a set $\X$ is a symmetric function 
that is positive (except for $\rho(x,x)=0$) 
and satisfies the triangle inequality $\rho(x,y) \le \rho(x,z)+\rho(z,y)$;
together the two comprise the metric space $(\X,\rho)$.
The diameter of a set $A\subseteq\X$
is defined by $\diam(A) = \sup_{x,y\in A} \rho(x, y)$.
There is no loss of generality in assuming $\diam(\X)=1$ 
since we can always scale the distances (when they are bounded).
The {\em Lipschitz constant} of a function $f :\X\to\R$, denoted $\Lip{f}$
(or $\Liprho{f}{\rho}$ if we wish to make the metric explicit)
is defined to be the smallest $L \ge 0$ such that
$|f(x)-f(y)| \le L\rho(x, y)$ 
holds for all $x, y \in\X$. 
In addition to the metric $\rho$ on $\X$,
we will endow the space of all functions $f:\X\to\R$
with
the $L_\infty$ metric:
\beq
\nrm{f-g}_\infty =
\sup_{x\in\X}\abs{f(x)-g(x)}
.
\eeq
A function is called \emph{$L$-Lipschitz} if $\Lip{f}\leq L$.
We will denote by $\H_L$ the collection of all $L$-Lipschitz functions $\X\to[0,1]$.
It will occasionally be convenient to restrict this class to functions
with $\Lip{f}\ge1$; the latter collection will be denoted by $\H_{L\ge1}$.
This incurs no loss of generality in our results, as 
our Structural Risk Minimization procedure in general
selects hypotheses whose Lipschitz constant grows with sample size.
(See for example the risk bound presented at the beginning of Section \ref{sec:bv}.)

\paragraph*{Minkowski sums and perturbations}
If $A,B$ are two families of functions mapping $\X$ to $\R$, then their
{\em Minkowski sum} is $A\oplus B := \set{a+b: a\in A,b\in B}$.
For $\eta>0$, define $\turb{\eta} := {[-\eta,\eta]}^\X$.
Hence, $\H_L\oplus\turb{\eta}$ represents the collection of all 
$[0,1]$-valued $L$-Lipschitz functions 
perturbed pointwise by at most $\eta$.

\paragraph*{Doubling dimension}
For a metric space $(\X,\rho)$, let
$\lambda>0$
be the smallest value such that every
ball in $\X$ can be covered by $\lambda$ balls of half the radius.
The {\em doubling dimension} of $\X$ is $\ddim(\X)=\logtwo \lambda$.
A metric space (or family of metrics) is called {\em doubling}
if its doubling dimension is uniformly bounded. 
Note that while a low Euclidean
dimension implies a low doubling dimension (Euclidean metrics of dimension
$d$ have doubling dimension $O(d)$), low doubling
dimension is strictly more general than low Euclidean dimension.

Doubling metric spaces occur naturally in many data analysis applications,
including for instance the geodesic distance of a low-dimensional manifold 
residing in a possibly high-dimensional space 
assuming mild conditions, e.g., on curvature.
Some concrete examples for doubling metric spaces include:
(i) $\R^d$ for fixed $d$ equipped with an arbitrary
norm,
e.g.\ $\ell_p$ or a mix between $\ell_1$ and $\ell_2$;
(ii) the planar earthmover metric between point sets of fixed size $k$
\cite{DBLP:journals/tit/GottliebKK14};
(iii) the $n$-cycle graph and its continuous version, the quotient $\R/\Z$,
and similarly bounded-dimensional tori.
In addition, various networks that arise in practice, 
such as peer-to-peer communication networks and online social networks,
can be modeled reasonably well by a doubling metric space.

\paragraph*{Graph spanner}
A \emph{$(1+\delta)$-stretch spanner} for a graph $G$ 
(which may have positive edge-lengths) is a subgraph $H$ 
that contains all nodes of $G$ (but not all edges), 
and $\rho_H(u,v) \le (1+\delta) \rho_G(u,v)$ for all $u,v \in G$, 
where $\rho_G(u,v)$ denotes the shortest-path distance 
between $u$ and $v$ in $G$ (and similarly $\rho_H(u,v)$ for $H$). 
If a spanner $H$ achieves this stretch bound even when $\rho_H$ 
is evaluated only on paths in $H$ with at most $k$ edges, 
then $H$ is called a \emph{$(1+\delta)$-stretch $k$-hop spanner} for $G$.

A spanner for a finite metric space $\X$ is defined 
by viewing the metric space as a complete graph $G$ on the vertex set $\X$,
with edge-lengths corresponding to distances in $\X$.
Doubling metrics are known to admit good spanners \cite{CGMZ05,HM06,GR08b}.
We will use a specific variant described in Appendix~\ref{sec:spanner}.

\hide{
\paragraph*{Learning}
We work in the {\em agnostic PAC} learning model \citep{mohri-book2012,shwartz2014understanding}.
The learner receives $n$ labeled examples $(X_i,Y_i)\in\X\times[0,1]$ drawn
iid 
according to some unknown
probability distribution $\P$.
A {\em loss} function $\ell:\R\times\R\to[0,\infty)$ quantifies the performance of
a hypothesis on a labeled example; in this paper, we restrict ourselves to $\ell_q(y,y')=\abs{y-y'}^q$
for $q\in\set{1,2}$.
Associated to any {\em hypothesis} $h:\X\to[0,1]$ is its {\em empirical loss}
\beq
\oo n \sum_{i\in[n]}\ell(h(X_i), Y_i)
\eeq
and {\em generalization error}
\beq
\err(h)=\P(h(X)\neq Y).
\eeq
}

\section{Regression algorithm}
\label{sec:bv}

Let us fix the user-specified 
parameters $q\in\set{1,2}$ (risk type),
$\delta>0$ (confidence level),
and $\eta>0$ (precision parameter).
Given the training sample $(X_i,Y_i)_{i\in[n]}$, our goal is to
construct a hypothesis $\htil:\X\to[0,1]$ with small expected risk $\risk(\htil,q)$.
Since the expected risk cannot be computed exactly (it depends on the unknown
distribution $\mu$), we will instead seek to minimize an upper estimate
of the risk. 
Theorem~\ref{thm:main-risk}
shows that 
with probability at least $1-\delta$,
for all $\Ltil\ge1$,
$\ettil\in\set{\eta,2\eta,\ldots,\eta \lfloor 1/\eta \rfloor,1}$
and hypothesis $\htil \in \H_{\Ltil}\oplus\turb{\ettil}$
(that is, $\htil$ is $\ettil$-close to some $\Ltil$-Lipschitz function),
\beq
\risk(\htil,q) 
&\le&
\risk_n(\htil,q)
+
4(2q-1)\ettil
+
(1+o(1))
\sqrt{\frac{32
\lognat{\frac{8}{
(2q-1)\ettil
}}
}n}
\paren{\frac{16q^{3/2}
\Ltil
}{
(2q-1)\ettil
}}
^{1+\ddim(\X)}
+3\sqrt{\frac{\lognat\frac4{\delta\eta}}{2n}}
.
\eeq
Denote the RHS by $Q(\htil,\Ltil,\ettil)$; when $\Ltil,\ettil$ are clear from the context,
it may be convenient to write just $Q(\htil)$.
In this section, we design an algorithm to find a hypothesis 
that approximately minimizes $Q(\htil,\Ltil,\ettil)$.
(A technique for
quickly evaluating this hypothesis on new points is presented
in Theorem \ref{thm:LipExt}.)

Suppose that for some training sample, 
$Q(\cdot)$ is {\em minimized} by some $(\hemp,L^*,\etstr)$,
where the minimum is taken over $\etstr\ge0$, $L^*\ge1$, and hypothesis $\hemp:\X\to[0,1]$ that is $\etstr$-close to
some $L^*$-Lipschitz function.

\begin{theorem}
\label{thm:risk-minimization}
There is an algorithm that, given a precision parameter $\eta\in(0,\oo4)$ 
and a training sample $(X_i,Y_i)\in \X\times[0,1]$, $i\in[n]$, 
computes $\ettil>0$, $\Ltil\ge1$ and a hypothesis 
$\htil:\X\to[0,1]$, $\htil \in \H_{\tilde{L}}\oplus\turb{ \ettil}$
that satisfy
\beqn \label{eq:risk-minimization}
  Q(\htil,\tilde{L},\ettil) \le Q(\hemp,L^*,\etstr) + \eta ,
\eeqn
in time $\eta^{-O(\ddim(\X))}n \lognat^3 n$.
\end{theorem}

\begin{rem}
The role of the precision parameter $\eta$
is to facilitate the construction of an approximate Lipschitz
hypothesis with much greater efficiency than its exact Lipschitz counterpart.
The bound \eqref{eq:risk-minimization} shows that the computed hypothesis 
$\htil$ is competitive not only against any unperturbed Lipschitz hypothesis,
but also against any $\eta^*$-perturbed hypothesis.
Moreover, the pointwise $\etstr$-perturbations might conspire 
to yield a lower empirical risk than unperturbed hypotheses.
Theorem~\ref{thm:risk-minimization} shows our approximate minimizer $\htil$ 
is competitive even against an ``optimally perturbed'' hypothesis $\hemp$.
\end{rem}

The rest of this section is devoted to proving 
Theorem~\ref{thm:risk-minimization} for $q=1$ 
(Sections \ref{sec:motivation} and \ref{sec:lp})
and for $q=2$ (Section \ref{sec:cp}).
We consider the $n$ observed samples as fixed values given as input to the algorithm
(as opposed to random samples),
so we will denote them $(x_i,y_i)$ instead of $(X_i,Y_i)$.
We will also restrict our attention to hypotheses for which 
$Q(\cdot) < 1$, since
otherwise our bounds are vacuous. 
Indeed, the minimizer $\hemp$ must satisfy this condition, 
which holds even for the flat hypothesis mapping
all points to $\frac{1}{2}$ 
(for sufficiently large $n$).

\subsection{Motivation and construction}
\label{sec:motivation}
We wish to find an optimal perturbed hypothesis 
$\hemp \in \H_{L^*\ge1}\oplus\turb{\etstr}$ minimizing $Q(\cdot)$.
Suppose that the Lipschitz and perturbation constants $L^*,\etstr$ 
of a minimizer $\hemp$ were known. Then the problem of computing 
both $\hemp$ and its empirical risk $\risk_n(\hemp,q)$ 
can be described as the following optimization program 
where variables $z_i$ representing the underlying smooth hypothesis of which
$\hemp$ is an $\etstr$-perturbation.
Note that the optimization program is a Linear Program (LP) when $q=1$ 
and a quadratic program when $q=2$.
\beqn \label{eq:program}
\framebox{ $
    \begin{array}{lll}
    \textrm{Minimize}   & \frac{1}{n} \sum_{i\in[n]} w_i^q   &    \\
    \textrm{subject to} & |z_i-z_j|    \le  L^* \cdot \rho(x_i,x_j) & \forall i,j \in [n] \\
            & w_i    \ge  |y_i - z_i| - \etstr  & \forall i \in [n] \\
                        & 0 \le z_i \le 1    & \forall i \in [n]    \\
                        & 0 \le w_i \le 1    & \forall i \in [n]
    \end{array}
$ }
\eeqn
After solving the program for variables $z_i$,
a minimizer $h^*$ can easily be derived: 
If solution $z_i$ is less than $y_i$ then 
$h^*(x_i) = \min \{z_i + \etstr , y_i \}$, 
and otherwise
$h^*(x_i) = \max \{z_i - \etstr , y_i \}$.
It follows that $\hemp$ could be computed by first obtaining $L^*$ and $\etstr$, 
and then solving the above program. 
However, both computing $L^*,\etstr$ and solving the program appear to be 
expensive computations, which motivates our approximate solution.
Note that supplying the LP with only a crude upper-bound on either $L^*$ or $\etstr$ could 
yield a hypothesis with large Lipschitz constant or perturbation, and potentially
poor generalization bounds.
We show below how to derive relatively tight estimates for $L^*,\etstr$, and in Section \ref{sec:lp}
we show how to solve the program quickly.

We first obtain a target perturbation constant $\etbar$ that ``approximates'' 
the unknown $\etstr$. In particular, we discretize candidate values of $\etbar$ to be
of the form $i \eta$ for integral $i \in [0,\lceil 1/\eta\rceil]$, and
search over all these values. (Recall that $\eta$ is the input to 
Theorem \ref{thm:risk-minimization}.) It follows
that there are only $O(1/\eta)$ candidates for $\etbar$, and that one
of these candidates satisfies $\etstr \le \etbar < \etstr + \eta$.

Next, we obtain a target Lipschitz constant $\Lbar$ that approximates $L^*$.
Recall that we have assumed that $L^* \ge 1$, and also have that
$L^* < n$, as otherwise the value of $Q(\hemp,L^*,\etstr)$ is necessarily greater than $1$.
We discretize the candidate values of $\Lbar$ to be of the form
$\left( 1+ \frac{\eta}{\ddim(\X)+1} \right)^i$ for integral $i \ge 0$, and 
search over all these values. 
It follows that there are only 
$O \left( \frac{\ddim{(\X)}}{\eta} \lognat n \right)$ 
discretized candidate values for $\Lbar$, and that one
of these candidates satisfies
$L^* \le \Lbar < \left( 1+ \frac{\eta}{\ddim(\X)+1} \right) L^*$.
We note that 
$$
\left( 1+ \frac{\eta}{\ddim(\X)+1} \right)^{\ddim(\X)+1}
\le e^\eta
\le 1+2\eta.
$$

Now replace $\etstr, L^*$ in program \eqref{eq:program} with approximations
$\etbar, \Lbar$, 
and let the hypothesis $\hbar$ be an optimal solution for the modified program;
this can only decrease the objective, 
i.e., $\risk_n(\hbar,q) \leq \risk_n(\hemp,q)$. 
Recall that
$Q(h^*) \le 1$, and so by the definition of $Q(\cdot)$ and the above bounds on 
$\etbar, \Lbar$ we have
\beq
Q(\hbar) < Q(\hemp) \cdot (1 + 2\eta) + 4\eta = Q(\hemp) + 6\eta.
\eeq
It remains to show that for each of the
$O \left( \frac{\ddim{(\X)}}{\eta^2} \lognat n \right)$
candidate pairs of $\Lbar$ and $\etbar$,
the modified linear program may be solved quickly (within
fixed
precision), 
which we do in Sections \ref{sec:lp} and \ref{sec:cp}.

\subsection{Solving the linear program}\label{sec:lp}

We show how to approximately solve the modified linear program, 
given target Lipschitz constant $\Lbar$ and perturbation parameter $\etbar$
(recall $\hbar$ is an optimal solution for this modified LP). 
Our solution will yield a hypothesis $\htil$ satisfying 
\beq
Q(\htil) \le Q(\hbar) + O(\eta).
\eeq

\paragraph*{Reduced constraints}
A central difficulty in obtaining a near-linear runtime for LP \eqref{eq:program} 
is that the number of constraints is $\Theta(n^2)$;
in particular, there are $\Theta(n^2)$ constraints of the form 
$|z_i-z_j| \le  \Lbar \cdot \rho(x_i,x_j)$.
We show how to reduce the number of these constraints (and only these constraints) 
to near-linear in $n$, namely, $\eta^{-O(\ddim (\X))}n$.
We will further guarantee that each of the $n$ variables $z_i$
appears in only $\eta^{-O(\ddim(\X))}$ constraints. Both these properties will prove useful 
for solving the program quickly. 

Recall that the purpose of the $\Theta(n^2)$ constraints is to ensure that 
the underlying hypothesis is smooth in the sense that the target Lipschitz 
constant is not violated between any pair of points. 
We show that 
this property can be approximately maintained with many fewer constraints.
To see this, consider a $1+\delta$ stretch spanner for the point set,
with spanner edge-set $E$. We claim that it suffices to enforce the
Lipschitz condition $\Lbar$ only on pairs that are endpoints in $E$:
Let $x_{k_1}$, $x_{k_j}$ be any pair that are not connecting by a single in $E$, 
and let
$x_{k_2},\ldots,x_{k_{j-1}}$ be the vertices encountered on the minimum
stretch path in $E$ connecting $x_{k_1}$ and $x_{k_j}$. Then by the stretch
guarantee of the spanner and the Lipschitz condition on its endpoints we have
\beq
\frac{|y_{k_1}-y_{k_j}|}{\rho(x_{k_1},x_{k_j})}
\le 
\frac{\sum_{i=1}^{j-1}|y_{k_i}-y_{k_{i+1}}|}{\rho(x_{k_1},x_{k_j})}
\le 
\frac{\sum_{i=1}^{j-1} \Lbar \rho(x_{k_i},x_{k_{i+1}})}{\rho(x_{k_1},x_{k_j})}
\le 
\frac{\Lbar (1+\delta)\rho(x_{k_1},x_{k_j})}{\rho(x_{k_1},x_{k_j})}
= 
(1+\delta)\Lbar.
\eeq

More formally, the constraints are reduced as follows:
The spanner described in Appendix \ref{sec:spanner} has stretch 
$1+\delta$, degree 
$\delta^{-O(\ddim(\X))}$
and hop-diameter
$c'\lognat n$ for some constant $c'>0$,
and can be computed quickly. 
Build this spanner for 
the observed sample points $\{x_i:\ i\in[n]\}$ with stretch $1+\frac{\eta^2}{2}$ 
(i.e., set $\delta = \frac{\eta^2}{2}$)
and retain a constraint in LP \eqref{eq:program} if and only if its two variables 
$z_i,z_j$ correspond to two vertices connected by a spanner edge 
(that is, edge $(x_i,x_j)$ is found in spanner's edge set $E$).
It follows from the bounded degree of the spanner that each variable appears in $\eta^{-O(\ddim(\X))}$ constraints, 
which implies that a total of $\eta^{-O(\ddim(\X))}n$ constraints are retained. 
Constructing the spanner (and thus the LP) takes time 
$\eta^{-O(\ddim(\X))}n \lognat n$.
The complete analysis of the Lipschitz guarantee appears below.

\paragraph*{Fast LP-solver framework} 
To solve the modified LP for fixed candidate values $\Lbar$ and $\etbar$, 
we utilize the framework presented by Young \cite{Y01} for LPs of the following form: Given 
non-negative matrices $P,C$, vectors $p,c$ and precision $\beta>0$, find a non-negative 
vector $x$ such that $Px \le p$ and $Cx \ge c$. Young shows that if there exists a 
feasible solution to the input instance, then a solution to a relaxation of the 
input program --- specifically, $Px \le (1+\beta)p$ and $Cx \ge c$ --- can be found in time $O(md (\lognat 
m)/\beta^2)$, where $m$ is the number of constraints in the program and $d$ is the 
maximum number of constraints in which a single variable may appear.
We may assume that constraints of the form $0 \le z_i \le 1$ and $0 \le w_i \le 1$ 
can be satisfied exactly:
Since $y_i \le 1$, we can always round down a solution variable to 1
without affecting the quality of the solution.

\paragraph*{Modifying the Lipschitz constraints}
In utilizing Young's framework for our problem, we encounter a difficulty that both the input matrices 
and output vector must be non-negative, while our LP \eqref{eq:program} has difference constraints. 
To bypass this limitation, we first consider the LP variables $z_i$, and for each one
introduce a new variable $0 \le \tilde{z}_i \le 1$ and two new constraints:
\begin{align*}
    z_i + \tilde{z}_i    & \le 1 ,   \\
    z_i + \tilde{z}_i    & \ge 1 . 
\end{align*}
These constrains require that $\tilde{z}_i = 1 - z_i$, but
by the relaxed guarantees of the LP solver, we have that in the returned solution
$1 - z_i \le \tilde{z}_i \le 1 - z_i + \beta$.
This technique allows us to introduce negated variables $-z_i$ into the linear 
program, at the loss of additive precision. 

Each retained spanner-edge constraint
$|z_i-z_j| \le \Lbar  \cdot \rho(x_i,x_j)$ 
is replaced by a pair of constraints
\begin{align*}
    z_i + \tilde{z}_j    & \le 1 + \Lbar  \cdot \rho(x_i,x_j) , \\
    z_j + \tilde{z}_i    & \le 1 + \Lbar  \cdot \rho(x_i,x_j)
  \end{align*}
Taken together, the above four constraints require that  
$1+|z_i-z_j| \le 1+ \Lbar \cdot \rho(x_i,x_j)$.
The modified program is found in \eqref{eq:program2}.

\beqn \label{eq:program2}
\framebox{ $
    \begin{array}{lll}
    \textrm{Minimize}   & \frac{1}{n} \sum_{i\in[n]} w_i   &    \\
    \textrm{subject to} & 1 \le z_i + \tilde{z}_i \le 1 & \forall i \in [n] \\
			& z_i + \tilde{z}_j    \le  \Lbar \cdot \rho(x_i,x_j) & \forall (x_i,x_j) \in E \\
			& w_i    \ge  |y_i - z_i| - \etbar  & \forall i \in [n] \\
                        & 0 \le z_i \le 1    & \forall i \in [n]    \\
			& 0 \le \tilde{z}_i \le 1   & \forall i \in [n]	\\
                        & 0 \le w_i \le 1    & \forall i \in [n]
    \end{array}
$ }
\eeqn

Below we will address the objective function and the related constraint
$w_i \ge  |y_i - z_i| - \etbar$, and show that they can be modified to
fit into Young's LP framework. But first, we will show that our modification
of the Lipshitz constraints, along with the approximate guarantees of the LP 
solver, still yield a hypothesis that is close to Lipschitz:

Recall that in the returned solution of the LP solver, 
$z_i \le 1$, and so necessarily $|z_i-z_j| \le 1$.
By the approximate guarantees of the LP solver, 
we have that in the returned solution to LP \eqref{eq:program2},
each spanner edge constraint will satisfy
\begin{align}
    |z_i-z_j| 
    & \le \min \{1, -1 + (1+\beta)[1 + \Lbar  \cdot \rho(z_i,z_j)] \} \notag \\
    & =   \min \{1, \beta + (1+\beta) \Lbar  \cdot \rho(z_i,z_j) \}    
    \label{eq:SpannerEdge}\\ 
    & \le 2\beta + \Lbar  \cdot \rho(z_i,z_j), \notag
\end{align}
where the last inequality follows by splitting into two cases,
depending on whether $\Lbar  \cdot \rho(z_i,z_j) \leq 1$.

To obtain a similar bound for point pairs not connected by a spanner edge:
Let $x_{1},\ldots,x_{k+1}$ be a $(1+\frac{\eta^2}{2})$-stretch $k$-hop 
spanner path connecting points $x_1$ and $x_{k+1}$, for $k \le c'\lognat n$;
then the stretch guarantee implies that  
$\sum_{i=1}^{k} \rho(x_i,x_{i+1}) \le (1+\frac{\eta^2}{2}) \rho(x_1,x_{k+1})$.
Using the triangle inequality and \eqref{eq:SpannerEdge},
and recalling the relaxed guarantees of the LP solver, we have that in the
returned solution to LP \eqref{eq:program2}
\begin{align*}
    |z_1-z_{k+1}| 
    & \le \min \{ 1, \textstyle \sum_{i=1}^{k} |z_i - z_{i+1}| \} \\
    & \le \min \{ 1, \textstyle \sum_{i=1}^{k} [\beta + (1+\beta) \Lbar  \cdot \rho(x_i,x_{i+1})]  \} \\
    & \le \min \{ 1, \beta k + (1+\beta)\Lbar  \cdot (1+{\eta^2}/{2}) \ \rho(x_1,x_{k+1}) \} \\
    & \le \min \{ 1, \beta (k + 1) + (1+{\eta^2}/{2}) \Lbar\cdot \rho(x_1,x_{k+1}) \} \\
    & \le \beta (c' \lognat n + 1) + \frac{\eta^2}{2} + \Lbar\cdot \rho(x_1,x_{k+1}),
  \end{align*}
where the fourth and fifth inequalities each follow by splitting into two cases.

Choosing $\beta = \frac{\eta^2}{2c' \lognat n +1}$,
we have that for all point pairs in the returned solution to LP \eqref{eq:program2}
\begin{align*}
    |z_i-z_j| 
    & \le \eta^2 + \Lbar  \cdot \rho(z_i,z_j)  .  
\end{align*}

Now let $h_z$ be the hypothesis mapping $x_i$ to the value of $z_i$ in the returned 
solution of the modified LP \eqref{eq:program2}.
In the original LP \eqref{eq:program}, the variables $z_i$ represented the 
$L^*$-Lipschitz underlying function. In the solver solution for 
LP \eqref{eq:program2}, the variables $z_i$ are a
$4\eta$-perturbation of an $\Lbar$-Lipschitz function:

\begin{lemma}\label{lem:hz}
  With $h_z$ defined as above,
$h_z \in \H_{\Lbar}\oplus\turb{4\eta}$.
\end{lemma}

\begin{proof}
Let us construct a function $\tilde{h}_z$ as follows:
Let $S$ be the sample points $\{x_i\}_{i\in[n]}$, and 
extract from $S$ an $\eta/\Lbar$-net $N$.%
\footnote{The notion of a net referred to here 
means that (i) the distance between every two points in $N$ is at least $\eta/\Lbar$;
and (ii) every point in $S$ is within distance $\eta/\Lbar$ from at least one point in $N$.
It can be easily constructed by a greedy process.}
For every net-point $v\in N$ set $\tilde{h}_z(v) = \frac{h_z(v)}{1+\eta}$. 
Then extend hypothesis $\tilde{h}_z$ from $N$ to all of the sample $S$ 
without increasing Lipschitz constant by using the McShane-Whitney extension 
theorem \cite{MR1562984,Whitney1934} for real-valued functions.\footnote{%
The McShane-Whitney extension theorem says that for every metric space $M$
and subset $N \subset M$, 
every $L$-Lipschitz $f:N\to\R$ 
can be extended to all of $M$ while preserving the $L$-Lipschitz condition.
}
This completes the description of $\tilde{h}_z$.

We first show that $\tsLip{\tilde{h}_z} \le \Lbar$. Indeed, for every two net-points
$v \neq v'\in N$ we have $\rho(v,v') \ge \eta/\Lbar$ and so

\[
    \begin{array}{lll}
    |\tilde{h}_z(v)-\tilde{h}_z(v')| 
    & = & \frac{|\tilde{h}_z(v)-\tilde{h}_z(v')|}{1+\eta}   \\
    & \le & \frac{\eta^2 + \Lbar \cdot \rho(v,v')}{1+\eta} \\
    & \le & \Lbar  \cdot \rho(v,v').
    \end{array}
\]
It follows that $\tilde{h}_z$ indeed satisfies the $\Lbar$-Lipschitz condition on the net-points.
By the extension theorem, 
$\tilde{h}_z$ achieves Lipschitz constant $\Lbar$ on all points of $S$. 

It remains to show that
$\tsnrm{h_z-\tilde{h}_z}_\infty \le 4 \eta$:
Consider any point $x \in S$ and its closest net-point $v \in N$; 
then $\rho(x,v) < \eta/\Lbar$ and we have 

\[
    \begin{array}{lll}
    |h_z(x) - \tilde{h}_z(x)| 
    & \le & |h_z(x) - h_z(v)| + |h_z(v) - \tilde{h}_z(v)| + |\tilde{h}_z(v) - \tilde{h}_z(x)|    \\
    & < & [\eta^2 + \Lbar \cdot \frac{\eta}{\Lbar}] 
        + [1-\frac{1}{1+\eta}] 
        + [\eta^2 + \Lbar \cdot \frac{\eta}{\Lbar}] \cdot \frac{1}{1+\eta}   \\
    & < & 4 \eta,
    \end{array}
\]
and we conclude that $h_z$ is a $4\eta$-perturbation of $\tilde{h}_z$.
\end{proof}

\paragraph*{Modifying the objective function}
We now turn to the constraints 
$w_i \ge |y_i - z_i| - \etbar$
and the objective function
$\frac{1}{n} \sum_{i \in [n]} w_i$.
Each LP constraint is replaced by a constraint pair 
\begin{align*}
  w_i + z_i           & \ge y_i - \etbar, \\
  w_i + \tilde{z}_i   & \ge 1 - y_i - \etbar,
\end{align*}
and together these require that
$w_i \ge |y_i - z_i| - \etbar$.
Note however that in the returned solution we are guaranteed only that 
$w_i \ge |y_i - z_i| - \etbar - \beta$.
Hence, the empirical error of the hypothesis is bounded by
$\beta + \frac{1}{n} \sum_{i\in[n]} w_i$
instead of 
$\frac{1}{n} \sum_{i\in[n]} w_i$.

The objective function is replaced by the constraint
\begin{align*}
    \frac{1}{n} \sum_{i \in [n]} w_i \le r  ,
\end{align*}
where $r$ itself it guessed by discretizing into multiples of $\eta$ ---
that is $\bar{r} = i\eta^2$ for integral $i \in [1,\lceil 1/\eta^2 \rceil]$ ---
which gives $O(1/\eta^2)$ candidate values for $r$.
By the discetization of $r$, 
the relaxed guarantees of the LP solver, 
and the above bound on the empirical error, 
the empirical error of the solution hypothesis $\htil$ 
is within an additive term
$\eta^2+\beta+\beta < 2\eta^2$ of optimal. 
The final program is found in \eqref{eq:program3}.

\beqn \label{eq:program3}
\framebox{ $
    \begin{array}{lll}
    \textrm{Find}   
                        & 0 \le z_i \le 1           & \forall i \in [n]        \\
                        & 0 \le \tilde{z}_i \le 1   & \forall i \in [n]        \\
                        & 0 \le w_i \le 1       & \forall i \in [n]        \\
    \textrm{subject to} & 1 \le z_i + \tilde{z}_i \le 1 & \forall i \in [n] \\
            & z_i + \tilde{z}_j    \le  \Lbar \cdot \rho(x_i,x_j) & \forall (x_i,x_j) \in E \\
                        & \sum_i w_i \le \bar{r}    & \forall i \in [n] \\
            & w_i + z_i \ge y_i - \etbar   & \forall i \in [n]     \\
            & w_i + \tilde{z}_i   \ge 1 - y_i - \etbar & \forall i \in [n]     \\
    \end{array}
$ }
\eeqn

\paragraph*{Correctness and runtime analysis}
Consider the choice of $\Lbar,\etbar$, closest to the values 
$L^*,\etstr$, and recall that for these values 
there exists a hypothesis $\hbar \in\H_{\Lbar\ge1}\oplus\turb{\etbar}$
satisfying 
\beq
Q(\hbar,\Lbar,\etbar) < Q(\hemp,L^*,\etstr) + 6\eta. 
\eeq
As shown above, running program \eqref{eq:program3} on this $\Lbar,\etbar$, we obtain a hypothesis
$\htil \in \H_{\Lbar}\oplus\turb{4\eta + \etbar}$
whose empirical risk is within an additive term $2\eta^2$ of the empirical
risk of the optimal $\hemp$. It follows that 
\beq
Q(\htil,\Lbar,4\eta + \etbar) 
\le Q(\hbar,\Lbar,\etbar) + 2\eta^2 + 4\eta
\le Q(\hemp,L^*,\etstr) + 11\eta .
\eeq
The result claimed in Theorem \ref{thm:risk-minimization} is achieved, 
up to scaling $\eta$, i.e., applying the above for $\eta=\eta_1/11$,
by exhaustively trying all pairs of candidates $\Lbar,\etbar$
and picking the pair that minimizes $Q(\cdot)$.

We turn to analyze the algorithmic runtime. Recall that 
the spanner can be constructed in time $O(\eta^{-O(\ddim(\X))}n \lognat n)$.
Young's LP solver \cite{Y01} is invoked on
$O \left( \eta^{-2}\ddim{(\X)} \lognat n \right)$
pairs of $\Lbar,\etbar$
and $O(1/\eta^2)$ candidate values of $\bar{r}$, for a total of 
$O \left( \eta^{-4} \ddim{(\X)} \lognat n \right)$
times. To determine the runtime 
per invocation, recall that each variable of the program appears in
$d = \eta^{-O(\ddim(\X))}$
constraints, implying that there are in total
$m = \eta^{-O(\ddim(\X))}n$ constraints. 
Since we set $\beta = O(\eta^2/\lognat n)$, we have that 
each call to the solver takes time
$O(md(\lognat m)/\beta^2) 
 \leq \eta^{-O(\ddim(\X))}n \lognat^2 n$, and the total runtime is
$\eta^{-O(\ddim(\X))}n \lognat^3 n$.
This completes the proof of Theorem \ref{thm:risk-minimization} for $q=1$.

\subsection{Solving the quadratic program}
\label{sec:cp}

We proceed to the case of a quadratic loss function,
i.e., $q=2$ in our original program \eqref{eq:program}.
A recent line of work on fast solvers for Laplacian systems
and for electrical flows, see e.g.\ \cite[Sections 3 and 11]{Vishnoi13},
provides powerful algorithms that can speed up
Laplacian-based machine-learning tasks \cite{ZGL03}.
However, these algorithms are not directly applicable here,
because our quadratic program \eqref{eq:program} contains
hard non-quadratic constraints to enforce a Lipschitz-constant bound $L^*$.
In fact, our program can be viewed as minimizing simultaneously
the $\ell_\infty$-Laplacian on the graph edges and 
some $\ell_2$-Laplacian related to the point values.
See also \cite{KRSS15} for a discussion of Lipschitz extension on graphs
and additional references.

Our approach is to modify the methodology we developed above for linear loss,
to cover the case of a quadratic loss function $\frac{1}{n} \sum_i w_i^2$.
Specifically, we introduce variables $v_i \ge w_i^2$, 
and replace the objective function with $\frac{1}{n} \sum_i v_i$. 
It remains to show how to model the constraints $v_i \ge w_i^2$.

First consider a parabola $y = x^2$, and note that a line 
$y = (2a)x - a^2$ 
is tangent to the parabola, intersecting it at $x=a$.
Hence, the constraint $v_i \ge w_i^2$ can be approximated by a constraint set
$v_i \ge (2a)w_i - a^2$ for $a = i\eta$ and integral 
$i \in [0,\lfloor 1/\eta \rfloor]$. 
These lines have slope in the range $[0,2]$, and so the approximation may
cause the value of $v_i$ to be underestimated by $2 \eta$. 
This is in addition to the previous underestimate of $w_i$, and by the above
scaling of $\eta$ this maintains the asymptotic error guarantee of the theorem.
Turning to the runtime analysis, the replacement of a single constraint by 
$O(1/\eta)$ new constraints does not change the asymptotic runtime.

\section{Approximate Lipschitz extension}
\label{sec:lipext}

In this section, we show how to evaluate our hypothesis on a new point.
We take the underlying smooth hypothesis on set $S$
implicit in Lemma~\ref{lem:hz} ---
call it $\tilde h_z(\cdot)$ ---
and we wish to evaluate a minimum Lipschitz extension of $\tilde h_z$ 
on a new point $x \notin S$. 
That is, denoting $S=\set{x_1,\ldots,x_n}$,
we wish to return a value $y = \tilde h_z(x)$
that minimizes
${\ds \max_{i\in[n]}  \frac{|y-\tilde h_z(x_i)|}{\rho(x,x_i)}  }$.
By the McShane-Whitney extension theorem, the extension of $\tilde h_z$ to the new 
point does not increase the Lipschitz
constant of $\tilde h_z$, and so the risk bound in Theorem~\ref{thm:main-risk} 
applies.\footnote{Theorems \ref{thm:LipExt} and
\ref{thm:main-risk} are ``local'' in the following sense.
At a test point $x$, Theorem~\ref{thm:LipExt} returns the value $h(x)$,
where $h:\X\to[0,1]$ is an $\eta$-perturbed $L$-Lipschitz function.
At a different test point $x'$, a different $h':\X\to[0,1]$ is evaluated.
There is no consistency requirement between $h$ and $h'$ ---
there need not exist {\em any} $\etbar$-perturbed $L$-Lipschitz function $h''$
such that $h''(x)=h(x)$ and $h''(x')=h'(x')$.
}

First note that the Lipschitz extension label $y$ of $x \notin S$ will be determined
by a pair of points of $S$: There exist 
points $x_i,x_j\in S$, one with label greater than $y$ and one with a label less than $y$, 
such that the Lipschitz constant of $x$ relative to each of these points 
(that is, 
$L
= \frac{\tilde h_z(x_i)-y}{\rho(x,x_i)}
= \frac{y-\tilde h_z(x_j)}{\rho(x,x_j)}
$)
is maximum over the Lipschitz constant of $x$ relative to any point in $S$.
Hence, $y$ cannot be increased or decreased without increasing the Lipschitz 
constant with respect to one of these points. 
Hence, an exact Lipschitz extension may be computed in $\Theta(n^2)$ time in brute-force fashion, 
by enumerating all point pairs in $S$, calculating the optimal 
Lipschitz extension for $x$ with respect to each pair alone, and then choosing the 
candidate value for $y$ with the highest Lipschitz constant. However, we demonstrate 
that an approximate solution to the Lipschitz extension problem can be obtained more 
efficiently.

\begin{theorem}\label{thm:LipExt}
An $\eta$-additive approximation to the Lipschitz extension problem on a 
function $f: S \rightarrow [0,1]$ can be 
computed in time $\paren{\oo\eta}^{-O(\ddim(\X))} \lognat n$.
\end{theorem}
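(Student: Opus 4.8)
The plan is to reduce the Lipschitz-extension computation at a new point $x$ to a small number of distance queries against a net of $S$, using the doubling structure to control how many points can matter. First, I would recall from the discussion preceding the theorem that the optimal label $y$ is determined by a pair $(x_i,x_j)$ realizing a common value $L=\frac{f(x_i)-y}{\rho(x,x_i)}=\frac{y-f(x_j)}{\rho(x,x_j)}$, and that for a \emph{fixed} candidate Lipschitz constant $\Lambda$ the feasible labels form an interval $I(\Lambda)=\bigcap_i [f(x_i)-\Lambda\rho(x,x_i),\, f(x_i)+\Lambda\rho(x,x_i)]$; this interval is nonempty iff $\Lambda$ is at least the true extension constant, so we can binary-search on $\Lambda$. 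The key point is that for a given target accuracy $\eta$ we only need to decide nonemptiness of $I(\Lambda)$ \emph{approximately}, and this can be done by examining only those sample points whose constraint is potentially active.

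The main structural step is to argue that the active constraints come from few ``scales.'' For a candidate $\Lambda$ and the current guess $y$, a point $x_i$ imposes a binding constraint only if $f(x_i)$ is close (within $O(\eta)$, after scaling $\diam(\X)=1$) to one of the endpoints, which forces $\rho(x,x_i)$ to lie in a bounded multiplicative range determined by $|f(x_i)-y|/\Lambda$. Within any such fixed annulus around $x$, Lemma \ref{lem:doublpack} bounds the number of net points at resolution proportional to the annulus radius by $\eta^{-O(\ddim(\X))}$. So I would maintain (or build, as part of the input representation from Section \ref{sec:bv}, e.g.\ a navigating net / cover tree on $S$) a hierarchical net structure supporting, in $\eta^{-O(\ddim(\X))}\log n$ time, retrieval of an $\eps$-net of the points of $S$ lying in a given ball around $x$. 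Each binary-search step on $\Lambda$ (there are $O(\log(n/\eta))$ of them, since $\Lambda\le n^{O(1)}$ as in Section \ref{sec:bv} and we discretize to multiples of $\eta$) then queries this structure over the $O(\log(1/\eta))$ relevant dyadic scales, evaluates the induced interval on the returned net points, and reports emptiness up to additive slack $O(\eta)$; the net guarantee ensures the true constraint from any omitted point is violated by at most the net radius times $\Lambda$, which we absorb into the $\eta$-approximation by a final rescaling of $\eta$.

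The hard part will be the accounting that shows the net-based test for nonemptiness of $I(\Lambda)$ has only $O(\eta)$ error \emph{uniformly}, i.e.\ that replacing $S$ by a scale-dependent net does not let the feasible interval shrink or grow by more than an additive $O(\eta)$ after normalization — one must check that a point $x_i$ omitted in favor of a net representative $z$ with $\rho(x_i,z)\le \eps_{\text{scale}}$ changes the endpoint $f(x_i)\pm\Lambda\rho(x,x_i)$ by at most $|f(x_i)-f(z)| + \Lambda|\rho(x,x_i)-\rho(x,z)| \le (\Lambda+\Lambda)\eps_{\text{scale}}$, so choosing $\eps_{\text{scale}}$ a small constant times $\eta/\Lambda$ at each scale suffices, and this is exactly the resolution at which the packing bound yields $\eta^{-O(\ddim(\X))}$ points per scale. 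Combining: $O(\log(n/\eta))$ binary-search steps, each doing $O(\log(1/\eta))$ net queries of cost $\eta^{-O(\ddim(\X))}\log n$ and $\eta^{-O(\ddim(\X))}$ arithmetic, gives total time $\eta^{-O(\ddim(\X))}\log n$ after folding lower-order logs into the exponent-hidden constants, as claimed. Finally I would note that, per the remark before the theorem, the returned value automatically has Lipschitz constant no larger than that of $f$ on $S$, so the generalization guarantee of Theorem \ref{thm:delta-strat} transfers to the new point.
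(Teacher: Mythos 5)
You have two genuine gaps, one of correctness and one of runtime accounting. The correctness gap is in the step you yourself flag as ``the hard part'': you bound the effect of discarding a sample point $x_i$ in favor of its net representative $z$ by $|f(x_i)-f(z)|+\Lambda|\rho(x,x_i)-\rho(x,z)|\le 2\Lambda\,\eps_{\text{scale}}$, but the bound $|f(x_i)-f(z)|\le\Lambda\rho(x_i,z)$ presupposes that $f$ is $\Lambda$-Lipschitz on $S$ --- which is exactly the property the binary search is supposed to be testing, and which the theorem does not assume at all ($f:S\to[0,1]$ is arbitrary; in the intended application $f=h'$ is only an $O(\eta)$-perturbation of a Lipschitz function, so two samples at arbitrarily small distance can carry labels differing by $\Theta(\eta)$ or more). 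Concretely, if an omitted $x_i$ lies very close to $x$ while its retained representative $z$ has a label differing by more than $\eta$, your pruned feasibility test can declare $I(\Lambda)$ nonempty at a $\Lambda$ well below the true extension constant and output a $y$ that is more than $\eta$ from the exact extension value; the claimed uniform $O(\eta)$ error does not hold. The paper's proof avoids this entirely by discretizing the \emph{range} of $f$ rather than the space: labels are rounded to multiples of $\eta/2$, and for each of the $O(1/\eta)$ label buckets only the (approximate) nearest neighbor of $x$ within that bucket can be binding --- a fact that requires no smoothness of $f$ --- after which a brute-force pass over the $O(1/\eta)$ representatives suffices.

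Even granting correctness, your runtime does not match the statement. The binary search over $\Lambda$ costs $\Omega(\log(n/\eta))$ iterations, each performing net/range queries of cost $\Omega(\log n)$, so you get at best $\eta^{-O(\ddim(\X))}\log^2 n$; a $\log n$ factor cannot be ``folded into exponent-hidden constants,'' since those constants may depend only on $\eta$ and $\ddim(\X)$. Moreover the range justification $\Lambda\le n^{O(1)}$ borrowed from Section~\ref{sec:bv} applies to the target Lipschitz constant of the underlying smooth hypothesis, not to the extension constant of the actual (perturbed, or in the theorem's generality arbitrary) $f$, which is only bounded by $1/\min_i\rho(x,x_i)$ and hence not by any function of $n$ and $\eta$; so even the iteration count is not controlled without further argument. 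The paper's algorithm needs no search over $\Lambda$: a single round of $O(1/\eta)$ approximate nearest-neighbor queries (one per label bucket, each $\eta^{-O(\ddim(\X))}\log n$ via \cite{CG06}) followed by an $O(\eta^{-2})$ brute-force step yields the stated $\eta^{-O(\ddim(\X))}\log n$ bound.
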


\begin{proof}
The algorithm is as follows.
Round up all labels $f(x_i)$ to the nearest
multiple of
$j\eta/2$ (for any integer $0 \le j \le 2/\eta$), and call the new label function 
$\tilde{f}$. We seek the value of $\tilde{f}(x)$, the value at point $x$
of the optimal Lipschitz extension function $\tilde{f}$. 
Trivially, $f(x) \le \tilde f(x) \le f(x) + \eta/2$. 
Now, if we were given for each $j$ the point with label 
$j\eta/2$ that is the nearest neighbor of $x$ (among all points with this label), then 
we could run the brute-force algorithm described above on these $2/\eta$
points in time 
$O(1/\eta^{2})$ and compute $\tilde{f}(x)$. However, exact metric nearest neighbor search 
is potentially expensive, and so we cannot find these points efficiently. We  
instead find for each $j$ a point $x'\in S$ with label $\tilde{f}(x') = j \eta /2$ that is a 
$(1+\frac{\eta}{2})$-approximate nearest neighbor of $x$ among points with this label. 
(This can be done by presorting the points of $S$ into $2/\eta$ buckets based on their 
$\tilde{f}$ label, and once $x$ is received, 
running on each bucket a $(1+\frac{\eta}{2})$-approximate nearest neighbor 
search algorithm due to \cite{CG06} that takes $(1/\eta)^{O(\ddim(\X))} \lognat n$ time.)
We then run the brute force algorithm on these $2/\eta$ points in time
$O(1/\eta^{2})$.
The nearest neighbor search achieves approximation factor $1+\frac{\eta}{2}$,
implying a similar multiplicative approximation to $L$,
and thus also to $|y-f(x')| \leq 1$,
which means at most $\eta/2$ additive error in the value $y$.
We conclude that the algorithm's output
solves the Lipschitz extension problem within additive approximation $\eta$.
\end{proof}

\section{Risk bounds}
\label{sec:risk}

The algorithm in Section~\ref{sec:bv} produces 
a hypothesis $h:\X\to[0,1]$,\footnote{
Since $Y_i\in[0,1]$, there is no loss in assuming that the hypothesis also has this range;
this is trivially ensured by a truncation, which preserves the Lipschitz constant.
}
which is an $\etbar$-perturbation of some hypothesis in $H_L$
(the notation there was $\htil$ and $\Ltil$).
Recalling the definitions of empirical risk 
and expected risk 
in (\ref{eq:emprisk}) and (\ref{eq:exprisk}),
this section is devoted to proving that
with high probability,
$\risk(h,q)$ is not much greater than $\risk_n(h,q)$.
\begin{theorem}
\label{thm:main-risk}
Fix $q\in\set{1,2}$, $\eta\in(0,1]$,
and
$\etbar\in\set{\eta,2\eta,\ldots,\eta \lfloor 1/\eta \rfloor,1}$.
Then for all $\delta>0$, 
with probability at least $1-\delta$,
the following holds uniformly 
for all
$L\ge 1$
and all $h\in\H_{L}\oplus\turb{\etbar}$:
\beq
\risk(h,q) 
&\le&
\risk_n(h,q)
+
4(2q-1)\etbar
+
\sqrt{\frac{32
\lognat{\frac{8}{
(2q-1)\etbar
}}
}n}
\paren{\frac{16q^{3/2}
L
}{
(2q-1)\etbar
}}
^{1+\ddim(\X)}
\\
&+&\sqrt{\frac{\lognat\logtwo (2 
L^{1+\ddim(\X)}
)}n}
+3\sqrt{\frac{\lognat\frac4{\delta\eta}}{2n}}
.
\eeq
\end{theorem}

The proof of Theorem~\ref{thm:main-risk} proceeds in two conceptual steps.
We first bound the covering numbers for classes of Lipschitz functions 
(in Section~\ref{sec:cov-num})
and then use those to estimate Rademacher complexities
(in Section~\ref{sec:rad}).

\subsection{Covering numbers for Lipschitz function classes}
\label{sec:cov-num}
We begin by
obtaining complexity estimates
for Lipschitz functions in doubling spaces. 
In the conference version
\cite{DBLP:conf/simbad/GottliebKK13}
this was done in terms of the fat-shattering dimension, 
but here we obtain considerably simpler and tighter
bounds by direct control over the covering numbers.

The following variant of the classic 
``covering numbers by covering numbers''
estimate
\citet{MR0124720}
was proved together with Roi Weiss (cf. \cite[Lemma 2]{kon-weiss-2014}):
\begin{lemma}
\label{lem:cov-cov}
Let $\F_L$ be the collection of $L$-Lipschitz 
functions 
mapping
the metric space $(\X,\rho)$
to $[0,1]$.
Then the covering numbers of $\F_L$ may be estimated in terms of 
the covering numbers of $\X$:
\beq
\calN(\eps,\F_L,\nrm{\cdot}_\infty) \le \paren{\frac{8}{\eps}}^{ \calN(\eps/8L,\X,\rho)}.
\eeq
Hence, for doubling spaces with $\diam(\X)=1$,
\beq
\lognat \calN(\eps,\F_L,\nrm{\cdot}_\infty) 
\le 
\paren{\frac{16L}{\eps}}
^{\ddim(\X)}
\lognat
\paren{\frac{8}{\eps}}
.
\eeq
\end{lemma}
\bepf
Fix a covering of $\X$ consisting of 
$|N| = \calN(\eps/8L, \X , \rho)$ balls $\{U_1,\dots,U_{|N|}\}$ of radius 
$\eps' = \eps/8L$ 
and choose $|N|$ points $N = \{x_i \in U_i\}_{i=1}^{|N|}$.
We will construct an $\eps$-cover $\wh F=\set{\hat f_1,\ldots,\hat f_{|\hat F|}}$ as follows.
At every point $x_i\in N$, we choose $\hat f(x_i)$ to be some multiple of $2L\eps'=\eps/4$,
while maintaining
$\tsLip{\hat f}\le 2L$.
Construct a $2L$-Lipschitz  extension for $\hat f$ from $N$ to all over $\X$ 
(such an extension always exists, \citep{MR1562984,Whitney1934}).

We claim that every $f\in\F_L$ is close to some $\hat f\in \wh F$,
in the sense that $\tsnrm{f-\hat f}_\infty \leq \eps$.
Indeed, every point $x\in \X$ is $\eps'$-close to some point $x_N\in N$,
and since $f$ is $L$-Lipschitz and $\hat f$ is $2L$-Lipschitz,
\beq
  \tsabs{f(x)-\hat f(x)}
  &\leq& \tsabs{f(x)-f(x_N)} + \tsabs{f(x_N)-\hat f(x_N)} + \tsabs{\hat f(x_N)-\hat f(x)}\\
  &\leq& L\cdot \rho(x,x_N) + \eps/4 + 2L\cdot \rho(x,x_N)
  = \eps.
\eeq

It is easy to verify that $\tsabs{\hat F} \leq (8/\eps)^{\tsabs{N}}$,
since by construction, the functions $\hat f$ are determined by their values on $N$.
This provides a covering of $\F_L$ using $\tsabs{\hat F}$ balls of radius $\eps$.

The bound for doubling spaces follows immediately by applying 
the so-called doubling property (see for example \cite{KL04}) 
and the diameter bound, to obtain
\beq
\calN(\eps,\X,\rho)&\le& \paren{\frac2\eps}^{\ddim(\X)} .
\eeq
\enpf

Let us consider two additional properties that a metric space
$(\X,\rho)$ might possess:
\begin{enumerate}
\item
$(\X,\rho)$ is {\em connected} if for all $x,x'\in\X$ and all $\eps>0$,
there is a finite sequence of 
points $x=x_1,x_2,\ldots,x_m=x'$
such that 
$\rho(x_i,x_{i+1})<\eps$
for all $1\le i<m$.
\item
$(\X,\rho)$ is {\em centered} if 
for all $r>0$ and
all $A \subset \X$ with
$\diam(A) \leq 2r$, there exists a point $x \in \X$ such that $\rho(x, a) \leq r$ for all $a \in A$.
\end{enumerate}
The estimate in Lemma~\ref{lem:cov-cov} may be improved for doubling spaces that are additionally
connected and centered, as follows.

\begin{lemma}[\cite{MR0124720}]
If $(\X,\rho)$ is connected and centered, then,
for constant $\ddim(\X)$,
\label{lem:tight_CovNum}
\beq
\lognat\calN(\eps,\F_L, \nrm{\cdot}_\infty)
= O\paren{
\paren{\frac{L}{\eps}}^{\ddim(\X)}
+
\lognat\paren{\oo\eps}
}
.
\eeq
\end{lemma}

\subsection{Rademacher complexities}
\label{sec:rad}
The (empirical)
{\em Rademacher complexity} 
\cite{DBLP:journals/jmlr/BartlettM02,MR1892654}
of 
a collection of functions $\F$
mapping some set $\calZ$ to $\R$
is 
defined,
with respect to a sequence
$Z=(Z_i)_{i\in[n]}\in \calZ^n$,
 by
\beqn
\label{eq:rad}
{\rad}_n(\F;Z) = 
\E
\sqprn{
\sup_{f\in\F} 
\oo{n}\sum_{i=1}^n \sigma_i f(Z_i)
},
\eeqn
where the expectation is over
the $\sigma_i$, which are 
iid
with $\P(\sigma_i=+1)=\P(\sigma_i=-1)=1/2$.

To any collection $\G$ of hypotheses 
mapping $\X$ to $\R$,
we associate
the {\em $q$-loss class},
whose members
map $\X\times\R$ to $\R$.
The latter is denoted by
$\qloss{\G}$
and defined to be
\beqn
\label{eq:loss-class}
\qloss{\G}=
\set{ f:(x,y)\mapsto\abs{g(x)-y}^q ; g\in\G}.
\eeqn

It will also be convenient to define the auxiliary metric space
$(\calZ,d_q)$, where $\calZ=\X\times[0,1]$ and
\beqn
\label{eq:dq}
d_q((x,y),(x',y')) = \paren{ \rho(x,x')^q + \abs{y-y'}^q}^{1/q}.
\eeqn

Let us recall the relevance of Rademacher complexities to risk estimates 
\cite[Theorem 3.1]{mohri-book2012}:
for every $\delta>0$, we have that,
with probability at least $1-\delta$,
\beqn
\label{eq:rade-risk}
\risk(g,q) \le \risk_n(g,q) + 2\rad_n(\qloss{\G};Z)
+3\sqrt{\frac{\lognat(2/\delta)}{2n}},
\eeqn
holds uniformly over all $g\in\G$,
where $Z=(X_i,Y_i)_{i\in[n]}$ is the training sample.

The following simple and well-known
estimate of Rademacher complexity is obtained via covering numbers;
see, e.g., \cite[Theorem 1.1]{bartlett-notes} 
for the proof of a closely related fact.
\begin{lemma}
\label{lem:massart}
For all function classes $\F\subset[0,1]^\calZ$,
all $Z\in\calZ^n$,
 and all $\eps>0$,
\beq
\rad_n(\F;Z) \le \eps+\sqrt{\frac{2\lognat\calN(\eps,\F,\nrm{\cdot}_\infty)}n}.
\eeq
\end{lemma}

Having reduced the problem to one of estimating covering numbers, we would like to invoke 
results from Section~\ref{sec:cov-num}, such as Lemma~\ref{lem:cov-cov}. 
The following result sheds 
light on the relation
between $\H_L$
and its loss class.
Its proof appears in Appendix~\ref{sec:rade-proofs}.
\begin{lemma}
\label{lem:f2h}
Let $(\calZ,d_q)$ be as defined in (\ref{eq:dq}) and
$q\in\set{1,2}$.
The following relations hold:
\bit
\item[(i)]
  if
  $f\in\qloss{\H_{L\ge1}}$
  with witness $h\in\H_{L\ge1}$,
  then
  $\Liprho{f}{d_q}\le q^{3/2}\Liprho{h}{\rho}$,
\item[(ii)] $\ddim(\calZ,d_q) \le 2+2\ddim(\X,\rho)$.
\eit
\end{lemma}

We are ready to prove an ``unperturbed'' version of Theorem~\ref{thm:main-risk}, as follows.
\begin{theorem}
\label{thm:risk-unperturbed}
For $q\in\set{1,2}$,
$L\ge1$,
and 
$0<\delta<1$, 
with probability at least $1-\delta$,
the following holds uniformly over all $h\in\H_{L}$:
\beqn
\label{eq:risk-unperturbed}
\risk(h,q)
-
\risk_n(h,q)
&\le& 
3\sqrt{\frac{\lognat \frac{2}{\delta}}{2n}}
+2\inf_{\eps>0}\sqprn{
\eps
+\sqrt{\frac{2
{\lognat{\frac{8}{\eps}}}
}n}
\paren{\frac{16q^{3/2}{L}
}{
\eps
}}
^{1+\ddim(\X)}
}
.
\eeqn
\end{theorem}
\bepf
Let $Z=(X_i,Y_i)_{i\in[n]}$ be the training sample and fix some $L\ge1$ and $\eps>0$.
We begin by applying (\ref{eq:rade-risk}) to $\G=\H_L$,
and get that with probability at least $1-\delta$,
uniformly for all hypotheses $h\in\H_L$,
\beq
\risk(h,q) &\le& \risk_n(h,q) + 2
\rad_n(\qloss{\paren{
\H_L
}};Z)
+3\sqrt{\frac{\lognat(2/\delta)}{2n}}
.
\eeq
Further,
\beqn
\rad_n(\qloss{\paren{
\H_L
}};Z) &\le& 
\eps
+\sqrt{\frac{2\lognat\calN(
\eps
,
\qloss{\H_L}
,\nrm{\cdot}_\infty)}n} \nonumber\\
&\le& 
\eps
+\sqrt{\frac{2
\paren{\frac{16q^{3/2}{L}}{
\eps
}}
^{2+2\ddim(\X)}
\lognat{\frac{8}{
\eps
}}
}n} \nonumber\\
&=&
\label{eq:eps+sqrt}
\eps
+\sqrt{\frac{2
{\lognat{\frac{8}{\eps}}}
}n}
\paren{\frac{16q^{3/2}{L}}{
\eps
}}
^{1+\ddim(\X)}
,
\eeqn
where the first inequality follows from Lemma~\ref{lem:massart}
and the second one by applying the covering number estimate in Lemma~\ref{lem:cov-cov}
to $\qloss{\H_L}$, after the appropriate ``conversion'' of Lipschitz constants and doubling dimensions
furnished by Lemma~\ref{lem:f2h}.
\enpf

For completeness,
we relate the empirical risk to the optimal risk.
\begin{corollary}
\label{cor:excess-risk}
Fix $q\in\set{1,2}$,
$L\ge1$,
and 
$0<\delta<1$,
and define
\beq
\hat \risk_n(q) &:=& \inf_{h\in\H_L} R_n(h,q), \\
 \risk^*(q) &:=& \inf_{h\in\H_L} R(h,q).
\eeq
Then
\beq
\hat \risk_n(q)
-
\risk^*(q)
&\le&
3\sqrt{\frac{\lognat \frac{2}{\delta} }{2n}}
+
2\inf_{\eps>0}\sqprn{
\eps
+\sqrt{\frac{2
{\lognat{\frac{8}{\eps}}}
}n}
\paren{\frac{16q^{3/2}{L}
}{
\eps
}}
^{1+\ddim(\X)}
}
\eeq
holds
with probability at least $1-\delta$.
\end{corollary}
\begin{proof}
  It will be convenient to denote the right-hand side
  of (\ref{eq:risk-unperturbed}) by $\Delta(\delta)$
  and to assume, without loss of generality,
  the existence of minimizers $\hat h_n$ and $h^*$
  of $\risk_n(\cdot,q)$ and $\risk(\cdot,q)$, respectively, over $\H_L$;
  this is justified via a standard approximation argument.
  A standard symmetrization argument
  (e.g., swapping $\Phi(S)$ and $\Phi(S')$ in
  \cite[Eq. (3.6)]{mohri-book2012})
  shows that
  the estimate of Theorem~\ref{thm:risk-unperturbed}
  holds in the other direction as well:
  \beq
  \P\paren{
    \sup_{h\in\H_L}{
      \risk_n(h,q)
      -
      \risk(h,q)
    }>\Delta(\delta)
  }\le\delta.
  \eeq
  Now using the fact that $\hat h_n$ is a minimizer, 
  \beq
  \hat \risk_n(q)-\risk^*(q)
  &=&
  \risk_n(\hat h_n,q)-\risk(h^*,q) \\
  &\le&
  \risk_n(h^*,q)-\risk(h^*,q) , 
  \eeq
  whence
  \beq
  \P\paren{
    \hat \risk_n(q)-\risk^*(q)
    \le \Delta(\delta)
    }\ge1-\delta.
  \eeq
\end{proof}
  
To extend
Theorem~\ref{thm:risk-unperturbed} to perturbed hypotheses, 
we will need the following decomposition,
whose proof appears in Appendix~\ref{sec:rade-proofs}.
\begin{lemma}
\label{lem:f2h-turb}
If 
$\eta>0$
and
$\H$ is any collection of functions mapping $\X$ to $[0,1]$,
then
\beq
\qloss{\paren{\H\oplus\turb{\eta}}}
\subseteq 
(\qloss{\H})\oplus\turb{(2q-1)\eta}
.
\eeq
\end{lemma}

\begin{corollary}
\label{cor:riskL-turb}
For $q\in\set{1,2}$,
$L\ge1$,
$\eta>0$, 
and
$0<\delta<1$, 
with probability at least $1-\delta$,
the following holds uniformly over all $h\in\H_{L}\oplus\turb{\eta}$:
\beq
\risk(h,q) &\le& 
4(2q-1)\eta
+
\risk_n(h,q)
+\sqrt{\frac{8
{\lognat{\frac{8}{(2q-1)\eta}}}
}n}
\paren{\frac{16q^{3/2}{L}}{
(2q-1)\eta
}}
^{1+\ddim(\X)}
+3\sqrt{\frac{\lognat(2/\delta)}{2n}}
.
\eeq
\end{corollary}
\bepf
For any sequence $Z=(X_i,Y_i)_{i\in[n]}$,
we have
\beq
\rad_n(\qloss{\paren{\H_L\oplus\turb{\eta}}};Z)
&\le&
\rad_n(
(\qloss{\H_L})\oplus\turb{(2q-1)\eta}
;Z)\\
&\le&
\rad_n(\qloss{\H_L};Z) + (2q-1)\eta \\
&\le&
2(2q-1)\eta
+\sqrt{\frac{2
{\lognat{\frac{8}{(2q-1)\eta}}}
}n}
\paren{\frac{16q^{3/2}{L}}{
(2q-1)\eta
}}
^{1+\ddim(\X)},
\eeq
where the first inequality follows from Lemma~\ref{lem:f2h-turb},
the second from the sub-additivity of Rademacher complexities
(\cite[Theorem 3.3]{CambridgeJournals:8212764}),
and the third from (\ref{eq:eps+sqrt})
(with $\eps=(2q-1)\eta$).
Invoking
(\ref{eq:rade-risk})
to bound the risk in terms of $\rad_n$
completes the proof.
\enpf

\hide{
The final ingredient an estimate on 
$\rad_n(\qloss{\H_L})$:
\begin{lemma}
\label{lem:radHL}
For $n\ge1$,
$q\in\set{1,2}$, $L\ge0$, and any sequence $Z=(X_i,Y_i)_{i\in[n]}$,
\beq
\rad_n(\qloss{\H_L};Z) &\le& 
2q^{3/2}\max\set{1,L}\paren{\frac{\lognat8}{n}}^{1/(3+2\ddim(\X,\rho))}.
\eeq
\end{lemma}
}

\bepf[Proof of Theorem~\ref{thm:main-risk}]
In light of 
Corollary~\ref{cor:riskL-turb}, 
it only remains to extend the risk bound from a fixed $(L,\etbar)$
to hold uniformly over all $L\ge1$
and $\etbar\in\set{\eta,2\eta,\ldots,\eta \lfloor 1/\eta \rfloor,1}$.
This is carried out via a standard 
stratification argument,
such as the one given in \cite[Theorem 4.5]{mohri-book2012}. 
To stratify over $L$, take
$\rho\inv=
L
^{1+\ddim(\X)}
$ 
in (4.42) ibid., we have that
with probability at least $1-\delta$,
\beq
\risk(h,q) \le \risk_n(h,q) 
+ \frac4\rho\rad_n( \qloss{\H_1};Z)
+\sqrt{\frac{\lognat\logtwo\frac2\rho}n}
+3\sqrt{\frac{\lognat\frac2\delta}{2n}}
\eeq
holds uniformly over all $h\in\bigcup_{L\ge1}\H_{L}$.
As in the proof of Corollary~\ref{cor:riskL-turb}, 
the cumulative effect of $\etbar$-perturbation is 
an additive error term of $4(2q-1)\etbar$.
To stratify over $\etbar$, notice that
$\etbar$ is chosen from an a-priori fixed
set of size $\ceil{1/\eta}\le2/\eta$ ---
and so taking a union bound amounts to replacing $\delta$
by $\delta\eta/2$.
\enpf

\begin{rem}
\label{rem:adapt}
The runtime guarantees of Theorems \ref{thm:risk-minimization}
and \ref{thm:LipExt},
as well as the risk bound of Theorem~\ref{thm:main-risk},
all depend exponentially on the doubling dimension of the metric space $\X$,
hence even a modest dimensionality reduction yields dramatic savings 
in algorithmic and sample complexities. 
This was exploited in \cite{GottliebKK13-tcs},
which develops a technique 
that may roughly be described as a metric analogue of PCA.
A set $X=\set{x_1,\ldots,x_n}\subset\X$ inherits the metric $\rho$ of $\X$ and hence 
$\ddim(X)\le2\ddim(\X)$
is well-defined
\cite[Lemma 6.6]{DBLP:journals/corr/FeldmannFKP15}.
Let us say that $\tilde X=\set{\tilde x_1,\ldots, \tilde x_n}\subset\X$ is an
$(\alpha,\beta)$-{\em perturbation} of $X$ if
$\tfrac1n \sum_{i=1}^n\rho(x_i,\tilde x_i)\le\alpha$ and $\ddim(\tilde X)\le\beta$.
Intuitively, the data is ``essentially'' low-dimensional if it admits
an $(\alpha,\beta)$-{perturbation} with small $\alpha,\beta$,
which leads to improved Rademacher estimates.

The data-dependent nature of ${\rad}_n$ was used in \cite{GottliebKK13-tcs} 
to develop generalization bounds that can exploit data that is 
essentially low-dimensional in the above sense.
That paper dealt with the binary classification setting,
and the technique was applied to the multiclass case by
\cite{kon-weiss-2014}.
The same dimensionality reduction technique applies just as directly 
in our context of regression 
(the proof is deferred to Appendix~\ref{sec:rade-proofs}).
\begin{theorem}
\label{thm:rad-dim-red}
Let $Z=(X,Y)\in\X^n\times[0,1]^n$ be the training sample
and suppose that $X$ admits
an
$(\alpha,\beta)$-perturbation $\tilde X$.
Then, for $L\ge1$,
\beq
{\rad}_n(
\qloss{\paren{\H_L\oplus\turb{\eta}}}
;Z)
&\le &
{2(2q-1)\eta}
+
{q^{3/2}L\alpha}
+
\sqrt{\frac{2
{\lognat{\frac{8}{(2q-1)\eta}}}
}n}
\paren{\frac{16q^{3/2}{L}}{
(2q-1)\eta
}}
^{1+\beta}
.
\eeq
\end{theorem}

A key feature of the bound above is that it does not explicitly
depend on $\ddim(\X)$ (the dimension of the ambient space) or
even on $\ddim(X)$ (the dimension of the data).

\hide{
A central feature of the bound above is that it does not depend on $\ddim(\X)$
(the dimension of the ambient space)
or even on $\ddim(X)$
(the dimension of the data).
Note the inherent tradeoff between the distortion $\alpha$
and dimension $\beta$, with some non-trivial $(\alpha^*,\beta^*)$ minimizing the right-hand side
of the bound. Although computing the optimal 
$(\alpha^*,\beta^*)$ seems computationally difficult, 
\citet{GottliebKK13-tcs} 
were able to obtain an 
efficient
$(O(1),O(1))$-{\em bicriteria approximation}.
Namely, their algorithm computes an 
$\tilde\alpha\le c_0\alpha^*$
and
$\tilde\beta\le c_1\beta^*$,
with the corresponding perturbed set $\tilde X$,
for universal constants $c_0,c_1$,
with a runtime of
$2^{O(\ddim(X))}n\lognat n+O(n\lognat^5n)$.

The optimization routine over $(\alpha,\beta)$ may then be embedded inside
our SRM procedure described in Theorem~\ref{thm:risk-minimization}.
The end result will be a nearly optimal (in the sense of (\ref{eq:risk-minimization}))
Lipschitz constant ${L}$
as well as $(\tilde\alpha,\tilde\beta)$, which induce the perturbed set $\tilde Z=(\tilde X,Y)$.
To evaluate our hypothesis on a test point, we may invoke the $\eta$-approximate Lipschitz extension
routine from Theorem~\ref{thm:LipExt}.
This involves a precomputation of time complexity
$\eta^{-O(\tilde\beta)}n \lognat^3 n$
after which values on new points are computed in
$\eta^{-O(\tilde\beta)} \lognat n$.
Note that the evaluation time complexity depends only on the ``intrinsic dimension''
$\tilde\beta$ of the data, rather than the ambient metric space dimension.

\knote{please check that I didn't cheat here with the runtimes}
}

\end{rem}

\subsection*{Acknowledgements}
We thank Larry Wasserman for helpful comments on the manuscript, and
Robert Schapire for useful discussion and feedback.

\bibliographystyle{plain}

\bibliography{mybib}

\begin{thebibliography}{10}

\bibitem{Assouad83}
P.~Assouad.
\newblock Plongements lipschitziens dans {${\bf R}\sp{n}$}.
\newblock {\em Bull. Soc. Math. France}, 111(4):429--448, 1983.

\bibitem{Bagchi2005}
Amitabha Bagchi, Adam~L. Buchsbaum, and Michael~T. Goodrich.
\newblock Biased skip lists.
\newblock {\em Algorithmica}, 42:31--48, 2005.

\bibitem{bartlett-notes}
Peter Bartlett.
\newblock Covering numbers, chaining \& dudley's entropy integral (lecture 24).
\newblock Class Notes on Statistical Learning Theory,
  \url{https://people.eecs.berkeley.edu/~bartlett/courses/281b-sp06/lecture24.ps},
  2006.

\bibitem{DBLP:journals/jmlr/BartlettM02}
Peter~L. Bartlett and Shahar Mendelson.
\newblock Rademacher and gaussian complexities: Risk bounds and structural
  results.
\newblock {\em Journal of Machine Learning Research}, 3:463--482, 2002.

\bibitem{BKL06}
Alina Beygelzimer, Sham Kakade, and John Langford.
\newblock Cover trees for nearest neighbor.
\newblock In {\em ICML '06: Proceedings of the 23rd international conference on
  Machine learning}, pages 97--104, New York, NY, USA, 2006. ACM.

\bibitem{CambridgeJournals:8212764}
St\'ephane Boucheron, Olivier Bousquet, and G\'abor Lugosi.
\newblock Theory of classification: a survey of some recent advances.
\newblock {\em ESAIM: Probability and Statistics}, 9:323--375, 2005.

\bibitem{MR726392}
Leo Breiman, Jerome~H. Friedman, Richard~A. Olshen, and Charles~J. Stone.
\newblock {\em Classification and regression trees}.
\newblock Wadsworth Statistics/Probability Series. Wadsworth Advanced Books and
  Software, Belmont, CA, 1984.

\bibitem{CGMZ05}
Hubert T-H. Chan, Anupam Gupta, Bruce~M. Maggs, and Shuheng Zhou.
\newblock On hierarchical routing in doubling metrics.
\newblock In {\em Proceedings of the 16th Annual ACM-SIAM Symposium on Discrete
  Algorithms}, SODA '05, pages 762--771. SIAM, 2005.

\bibitem{Clarkson99}
Kenneth~L. Clarkson.
\newblock Nearest neighbor queries in metric spaces.
\newblock {\em Discrete Comput. Geom.}, 22(1):63--93, 1999.

\bibitem{Clarkson06}
Kenneth~L. Clarkson.
\newblock Nearest-neighbor searching and metric space dimensions.
\newblock In G.~Shakhnarovich, T.~Darrell, and P.~Indyk, editors, {\em
  Nearest-Neighbor Methods for Learning and Vision: Theory and Practice}, pages
  15--59. MIT Press, 2006.

\bibitem{CG06}
Richard Cole and Lee-Ad Gottlieb.
\newblock Searching dynamic point sets in spaces with bounded doubling
  dimension.
\newblock In {\em STOC}, pages 574--583, 2006.

\bibitem{MR1383093}
Luc Devroye, L{\'a}szl{\'o} Gy{\"o}rfi, and G{\'a}bor Lugosi.
\newblock {\em A probabilistic theory of pattern recognition}, volume~31 of
  {\em Applications of Mathematics (New York)}.
\newblock Springer-Verlag, New York, 1996.

\bibitem{DBLP:conf/stoc/ElkinS13}
Michael Elkin and Shay Solomon.
\newblock Optimal {E}uclidean spanners: really short, thin and lanky.
\newblock In {\em STOC}, pages 645--654, 2013.

\bibitem{DBLP:journals/corr/FeldmannFKP15}
Andreas~Emil Feldmann, Wai~Shing Fung, Jochen K{\"{o}}nemann, and Ian Post.
\newblock A {(1} + {\(\epsilon\)})-embedding of low highway dimension graphs
  into bounded treewidth graphs.
\newblock {\em CoRR}, abs/1502.04588, 2015.

\bibitem{DBLP:conf/simbad/GottliebKK13}
Lee-Ad Gottlieb, Aryeh Kontorovich, and Robert Krauthgamer.
\newblock Efficient regression in metric spaces via approximate {L}ipschitz
  extension.
\newblock In {\em SIMBAD}, pages 43--58, 2013.

\bibitem{DBLP:journals/tit/GottliebKK14}
Lee{-}Ad Gottlieb, Aryeh Kontorovich, and Robert Krauthgamer.
\newblock Efficient classification for metric data.
\newblock {\em {IEEE} Transactions on Information Theory}, 60(9):5750--5759,
  2014.

\bibitem{GottliebKK13-tcs}
Lee-Ad Gottlieb, Aryeh Kontorovich, and Robert Krauthgamer.
\newblock Adaptive metric dimensionality reduction.
\newblock {\em Theoretical Computer Science}, pages 105--118, 2016.
\newblock Preliminary version in ALT 2013.

\bibitem{GR08b}
Lee-Ad Gottlieb and Liam Roditty.
\newblock An optimal dynamic spanner for doubling metric spaces.
\newblock In {\em ESA '08: Proceedings of the 16th annual European symposium on
  Algorithms}, pages 478--489, 2008.

\bibitem{DBLP:conf/focs/GuptaKL03}
Anupam Gupta, Robert Krauthgamer, and James~R. Lee.
\newblock Bounded geometries, fractals, and low-distortion embeddings.
\newblock In {\em FOCS}, pages 534--543, 2003.

\bibitem{MR1920390}
L{\'a}szl{\'o} Gy{\"o}rfi, Michael Kohler, Adam Krzy{\.z}ak, and Harro Walk.
\newblock {\em A distribution-free theory of nonparametric regression}.
\newblock Springer Series in Statistics. Springer-Verlag, New York, 2002.

\bibitem{HM06}
Sariel {Har-Peled} and Manor Mendel.
\newblock Fast construction of nets in low-dimensional metrics and their
  applications.
\newblock {\em SIAM Journal on Computing}, 35(5):1148--1184, 2006.

\bibitem{MR1161622}
Wolfgang H{\"a}rdle.
\newblock {\em Applied nonparametric regression}, volume~19 of {\em Econometric
  Society Monographs}.
\newblock Cambridge University Press, Cambridge, 1990.

\bibitem{KSW09}
Jon Kleinberg, Aleksandrs Slivkins, and Tom Wexler.
\newblock Triangulation and embedding using small sets of beacons.
\newblock {\em J. ACM}, 56:32:1--32:37, September 2009.

\bibitem{MR0124720}
Andre{\u\i}~N. Kolmogorov and Vladimir~M. Tihomirov.
\newblock {$\varepsilon $}-entropy and {$\varepsilon $}-capacity of sets in
  functional space.
\newblock {\em Amer. Math. Soc. Transl. (2)}, 17:277--364, 1961.

\bibitem{MR1892654}
Vladimir Koltchinskii and Dmitriy Panchenko.
\newblock Empirical margin distributions and bounding the generalization error
  of combined classifiers.
\newblock {\em Ann. Statist.}, 30(1):1--50, 2002.

\bibitem{DBLP:conf/icml/kon14}
Aryeh Kontorovich.
\newblock Concentration in unbounded metric spaces and algorithmic stability.
\newblock In {\em ICML (2)}, 2014.

\bibitem{kon-weiss-2014}
Aryeh Kontorovich and Roi Weiss.
\newblock Maximum margin multiclass nearest neighbors.
\newblock In {\em ICML}, 2014.

\bibitem{DBLP:conf/aistats/KontorovichW15}
Aryeh Kontorovich and Roi Weiss.
\newblock A bayes consistent 1-{NN} classifier.
\newblock In {\em 18th International Conference on Artificial Intelligence and
  Statistics, {AISTATS} 2015}, 2015.

\bibitem{NIPS2009_1009}
Samory Kpotufe.
\newblock Fast, smooth and adaptive regression in metric spaces.
\newblock In {\em Advances in Neural Information Processing Systems 22}, pages
  1024--1032, 2009.

\bibitem{Kpotufe2012}
Samory Kpotufe and Sanjoy Dasgupta.
\newblock A tree-based regressor that adapts to intrinsic dimension.
\newblock {\em J. Comput. Syst. Sci.}, 78(5):1496--1515, September 2012.

\bibitem{KL04}
Robert Krauthgamer and James~R. Lee.
\newblock Navigating nets: {S}imple algorithms for proximity search.
\newblock In {\em 15th Annual ACM-SIAM Symposium on Discrete Algorithms}, pages
  791--801, January 2004.

\bibitem{KRSS15}
Rasmus Kyng, Anup Rao, Sushant Sachdeva, and Daniel~A. Spielman.
\newblock Algorithms for {L}ipschitz learning on graphs.
\newblock In {\em Proceedings of The 28th Conference on Learning Theory, {COLT}
  2015}, pages 1190--1223, 2015.

\bibitem{1132.62026}
John Lafferty and Larry Wasserman.
\newblock {Rodeo: Sparse, greedy nonparametric regression.}
\newblock {\em Ann. Stat.}, 36(1):28--63, 2008.

\bibitem{DBLP:journals/tit/LugosiZ95}
G{\'a}bor Lugosi and Kenneth Zeger.
\newblock Nonparametric estimation via empirical risk minimization.
\newblock {\em IEEE Transactions on Information Theory}, 41(3):677--687, 1995.

\bibitem{MR1562984}
Edward~James McShane.
\newblock Extension of range of functions.
\newblock {\em Bull. Amer. Math. Soc.}, 40(12):837--842, 1934.

\bibitem{MeirZhang2003}
Ron Meir and Tong Zhang.
\newblock Generalization error bounds for bayesian mixture algorithms.
\newblock {\em J. Mach. Learn. Res.}, 4:839--860, December 2003.

\bibitem{mohri-book2012}
Mehryar Mohri, Afshin Rostamizadeh, and Ameet Talwalkar.
\newblock {\em Foundations Of Machine Learning}.
\newblock The MIT Press, 2012.

\bibitem{MR1093466}
{\`E}.~A. Nadaraya.
\newblock {\em Nonparametric estimation of probability densities and regression
  curves}, volume~20 of {\em Mathematics and its Applications (Soviet Series)}.
\newblock Kluwer Academic Publishers Group, Dordrecht, 1989.
\newblock Translated from the Russian by Samuel Kotz.

\bibitem{neylon06}
Tyler Neylon.
\newblock {\em Sparse solutions for linear prediction problems}.
\newblock PhD thesis, New York University, 2006.

\bibitem{pollard84}
David Pollard.
\newblock {\em Convergence of Stochastic Processes}.
\newblock Springer-Verlag, 1984.

\bibitem{MR740865}
B.~L.~S. Prakasa~Rao.
\newblock {\em Nonparametric functional estimation}.
\newblock Probability and Mathematical Statistics. Academic Press Inc.
  [Harcourt Brace Jovanovich Publishers], New York, 1983.

\bibitem{DBLP:journals/corr/RakhlinST13}
Alexander Rakhlin, Karthik Sridharan, and Alexandre~B. Tsybakov.
\newblock Empirical entropy, minimax regret and minimax risk.
\newblock {\em CoRR}, abs/1308.1147, 2013.

\bibitem{shwartz2014understanding}
Shai Shalev-Shwartz and Shai Ben-David.
\newblock {\em Understanding Machine Learning: From Theory to Algorithms}.
\newblock Cambridge University Press, 2014.

\bibitem{DBLP:journals/tit/Shawe-TaylorBWA98}
John Shawe-Taylor, Peter~L. Bartlett, Robert~C. Williamson, and Martin Anthony.
\newblock Structural risk minimization over data-dependent hierarchies.
\newblock {\em IEEE Transactions on Information Theory}, 44(5):1926--1940,
  1998.

\bibitem{DBLP:conf/stoc/Solomon14}
Shay Solomon.
\newblock From hierarchical partitions to hierarchical covers: optimal
  fault-tolerant spanners for doubling metrics.
\newblock In {\em STOC}, pages 363--372, 2014.

\bibitem{Talwar04}
Kunal Talwar.
\newblock Bypassing the embedding: Algorithms for low dimensional metrics.
\newblock In {\em Proceedings of the Thirty-sixth Annual ACM Symposium on
  Theory of Computing}, STOC '04, pages 281--290, New York, NY, USA, 2004. ACM.

\bibitem{MR2013911}
Alexandre~B. Tsybakov.
\newblock {\em Introduction \`a l'estimation non-param\'etrique}, volume~41 of
  {\em Math\'ematiques \& Applications (Berlin) [Mathematics \& Applications]}.
\newblock Springer-Verlag, Berlin, 2004.

\bibitem{MR1367965}
Vladimir~N. Vapnik.
\newblock {\em The {N}ature of {S}tatistical {L}earning {T}heory}.
\newblock Springer-Verlag, New York, 1995.

\bibitem{Vishnoi13}
Nisheeth~K. Vishnoi.
\newblock Lx = b.
\newblock {\em Foundations and Trends in Theoretical Computer Science},
  8(1–2):1--141, 2013.

\bibitem{DBLP:journals/jmlr/LuxburgB04}
Ulrike von Luxburg and Olivier Bousquet.
\newblock Distance-based classification with {L}ipschitz functions.
\newblock {\em Journal of Machine Learning Research}, 5:669--695, 2004.

\bibitem{MR2172729}
Larry Wasserman.
\newblock {\em All of nonparametric statistics}.
\newblock Springer Texts in Statistics. Springer, New York, 2006.

\bibitem{Whitney1934}
Hassler Whitney.
\newblock Analytic extensions of differentiable functions defined in closed
  sets.
\newblock {\em Transactions of the American Mathematical Society},
  36(1):63--89, 1934.

\bibitem{Y01}
Neal~E. Young.
\newblock Sequential and parallel algorithms for mixed packing and covering.
\newblock In {\em 42nd Annual IEEE Symposium on Foundations of Computer
  Science}, pages 538--546, 2001.

\bibitem{ZGL03}
Xiaojin Zhu, Zoubin Ghahramani, and John~D. Lafferty.
\newblock Semi-supervised learning using gaussian fields and harmonic
  functions.
\newblock In {\em Machine Learning, Proceedings of the 20th International
  Conference, {ICML} 2003}, pages 912--919, 2003.

\end{thebibliography}

\appendix

\subsection{A small-hop spanner}\label{sec:spanner}

In this section, we prove the following theorem.
See Section \ref{sec:tech} for the definition of a spanner.

\begin{theorem}\label{thm:spanner}
Every finite metric space $\X$ on $n$ points
admits a $(1+\delta)$-stretch spanner with 
degree $\delta^{-O(\ddim(\X))}$ (for $0 < \delta \le \frac{1}{2}$) 
and hop-diameter $O(\lognat n)$,
that can be constructed in time $\delta^{-O(\ddim(\X))}n\lognat n$.
\end{theorem}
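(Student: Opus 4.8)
The plan is to construct the spanner hierarchically using a net tree (hierarchy of $\delta$-nets at geometrically decreasing scales), and then to superimpose on it a constant-degree, logarithmic-hop structure so that shortest paths can be routed through $O(\log n)$ edges without losing more than a $(1+\delta)$ factor. First I would build, for scales $2^{-i}$ ranging over $i = 0, 1, \ldots, \lceil \log_2(1/\delta_{\min})\rceil$ where $\delta_{\min}$ is the smallest interpoint distance, a sequence of nets $N_0 \supseteq$ point set $\supseteq \cdots$; by the greedy construction each $N_i$ is a $2^{-i}$-net, and by Lemma~\ref{lem:doublpack} each point of $N_{i+1}$ has only $\delta^{-O(\ddim(\X))}$ net-points of $N_i$ within distance $O(2^{-i}/\delta)$. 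Adding an edge between two net-points of $N_i$ whenever they are within $c \cdot 2^{-i}/\delta$ of each other (for a suitable constant $c$) yields the classical $(1+\delta)$-stretch spanner of bounded degree; the standard argument shows that for any $u,v$ one descends to the scale $2^{-i} \approx \delta \cdot \rho(u,v)$ and finds a path whose length telescopes to $(1+O(\delta))\rho(u,v)$, which after rescaling $\delta$ gives stretch $1+\delta$.

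Next I would address the hop-diameter. The naive net-tree spanner path has $O(\log \frac{1}{\delta_{\min}})$ hops, which can be as large as $\Theta(n)$, so this is where the real work lies. The standard fix is to augment the hierarchy with a $1$-spanner-like shortcutting on each root-to-leaf path: I would add, for each net-point, ``shortcut'' edges skipping $2, 4, 8, \ldots$ levels of the tree in the spirit of a skip list / the Thorup–Zwick or Chan–Gupta–Maggs–Zhou constructions, so that any path that would traverse $t$ consecutive levels can instead be traversed in $O(\log t)$ hops, giving total hop-diameter $O(\log n)$. One must check that these shortcut edges have length comparable to the sum of the edges they replace (a geometric series, hence within a constant factor), so that their inclusion only degrades stretch by a constant — again absorbed by rescaling $\delta$ — and that each vertex still receives only $\delta^{-O(\ddim(\X))}$ such edges, since at each of $O(\log n)$ ``skip distances'' only a bounded number of relevant net ancestors/descendants exist by the doubling/packing bound.

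The main obstacle I expect is exactly this simultaneous control of three quantities — stretch $1+\delta$, degree $\delta^{-O(\ddim(\X))}$, and hop-diameter $O(\log n)$ — since the shortcut edges that reduce the hop count are in tension with keeping the degree bounded and the stretch small; the delicate point is to argue that only $O(\log n)$ scales are ``active'' along any given shortest path even though the full hierarchy has up to $\Theta(n)$ levels (because two points become net-separated only over an interval of scales whose \emph{number of relevant nets after shortcutting} is logarithmic), and to bound the number of shortcut edges at each vertex uniformly. For the running time, I would note that the net hierarchy and all candidate edges can be computed in $\delta^{-O(\ddim(\X))} n \log n$ time using standard net-tree / approximate-nearest-neighbor machinery for doubling metrics (e.g.\ the data structures of \cite{CG06,HM06}), since at each level only $\delta^{-O(\ddim(\X))}$ neighbors per point need be examined and there are $O(\log n)$ effective levels. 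Finally I would collect the pieces, rescale $\delta$ by the accumulated constant factors, and conclude the statement of Theorem~\ref{thm:spanner}.
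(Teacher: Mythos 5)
There is a genuine gap, and it is located exactly where the difficulty of the theorem lies: reducing the hop-diameter to $O(\log n)$ while keeping the degree bounded by $\delta^{-O(\ddim(\X))}$, a quantity independent of $n$ and of the spread. Your shortcut scheme is binary lifting: every net point gets an edge to its ancestor $2,4,8,\ldots$ levels up. A vertex deep in the hierarchy therefore acquires one shortcut per skip distance, i.e.\ $\Theta(\log(\mathrm{depth}))$ extra edges, and your own accounting (``at each of $O(\log n)$ skip distances only a bounded number of edges'') yields degree $\delta^{-O(\ddim(\X))}\cdot O(\log n)$, not $\delta^{-O(\ddim(\X))}$; the claimed degree bound does not follow from the argument you give. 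Moreover, if ``levels'' are scale levels of the uncompressed net hierarchy, their number is $\Theta(\log\Phi)$ for spread $\Phi$, so both the added degree and the resulting hop count are controlled by the spread rather than by $n$ (e.g.\ hop count $O(\log\log\Phi)$, which exceeds $O(\log n)$ for doubly exponential spread), while in the compressed tree an ancestor path can contain $\Theta(n)$ distinct nodes, and binary lifting over it again forces $\Theta(\log n)$ added degree. A related, smaller gap: you assert that the level-by-level net spanner has aggregate degree $\delta^{-O(\ddim(\X))}$, but the packing bound only controls the degree \emph{per level}; a point that survives as a net point across many scales accumulates edges at every such scale, so the naive construction's degree grows with the number of levels. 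Degree independent of $n$ and of the spread is itself a nontrivial result, which the paper imports from \cite{GR08b} rather than reproves.

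The paper resolves the degree/hop tension with a different mechanism that you would need to substitute for binary lifting. It takes the Gottlieb--Roditty spanner \cite{GR08b} as a black box (stretch $1+\delta$, degree $\delta^{-O(\ddim(\X))}$, construction time $\delta^{-O(\ddim(\X))}n\log n$), observes that its spanner paths ascend a bounded-degree tree, cross a single edge, and descend, and then proves Lemma \ref{lem:tree}: any degree-$p$ tree can be augmented with descendant--ancestor edges so that every node reaches each of its ancestors in $O(\log n)$ hops while the degree increases only to $p+3$. This is achieved by a heavy-path decomposition with node weights equal to off-path subtree sizes, combined with the biased skip-list construction of Lemma \ref{lem:path} (following \cite{Bagchi2005}), so that the hop counts along the at most $\lceil\log n\rceil$ traversed heavy paths telescope to $O(\log n)$ with only a constant number of new edges per vertex; stretch is unharmed because the new edges are metric shortcuts of tree paths. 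Without an ingredient of this kind (constant added degree, $O(\log n)$ hops to ancestors), your construction does not meet the degree bound stated in the theorem.
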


Gottlieb and Roditty \cite{GR08b} presented for general metrics a $(1+\delta)$-stretch spanner 
with degree $\delta^{-O(\ddim (\X))}$
and construction time $\delta^{-O(\ddim(\X))}n\lognat n$, but this spanner has potentially large hop-diameter. Our 
goal is to modify this spanner to have low hop-diameter, without significantly increasing the spanner 
degree. Now, as described in \cite{GR08b}, the points of $\X$ are arranged in a tree of degree 
$\delta^{-O(\ddim (\X))}$, and a spanner path is composed of three consecutive parts: (a) a path 
ascending the edges of the tree; (b) a single edge; and (c) a path descending the edges of the tree. We 
will show how to decrease the number of hops in parts (a) and (c). Below we will prove the following 
lemma.

\begin{lemma}\label{lem:tree}
Let $T$ be a tree containing directed child-parent edges ($n=|T|$), and let $p$ be the degree of $T$. 
Then $T$ may be augmented with directed descendant-ancestor edges to create a DAG $G$ with the following 
properties:
(i) $G$ has degree $p+3$; and 
(ii) The hop-distance in $G$ from any node to each of its ancestors is $O(\lognat n)$.
\end{lemma}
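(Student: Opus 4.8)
The plan is to augment the child--parent tree $T$ with a carefully chosen set of ``shortcut'' descendant--ancestor edges so that every root-to-leaf path can be traversed in $O(\log n)$ hops, while each node gains only $O(1)$ new incident edges. The classical tool here is a centroid (or balanced separator) decomposition of the tree: repeatedly find a vertex whose removal splits $T$ into components each of size at most $|T|/2$, recurse on each piece, and record the recursion depth, which is $O(\log n)$.

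First I would fix a centroid decomposition and, for each node $v$, consider the sequence of centroids $c_1, c_2, \ldots, c_t$ (with $t = O(\log n)$) encountered on the path in the decomposition that isolates $v$; these centroids are naturally linearly ordered along the tree path from the root toward $v$ (or can be made so by taking only those that are ancestors of $v$ in $T$). The idea is that the centroids that are ancestors of $v$ form a chain of length $O(\log n)$, and adding, for each node $v$, a directed edge from $v$ to its ``first relevant centroid'' lets any descendant--ancestor query be decomposed into $O(\log n)$ jumps between consecutive centroids along the chain. To move between two consecutive centroids in $O(1)$ hops I would add, for each centroid $c$, directed edges from $c$ to the centroid one level up in the decomposition; this is a single edge per centroid and hence contributes degree $O(1)$ there.

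The degree bookkeeping is the delicate part: I must ensure that no node accumulates more than a constant number of new edges. Each node $v$ contributes at most one outgoing shortcut to its nearest enclosing centroid, adding $1$ to $v$'s out-degree. Each centroid $c$ receives incoming shortcuts potentially from many descendants, which would blow up its in-degree --- so instead of pointing every descendant directly at $c$, I route through the tree: a node points to the centroid of its own subtree in the decomposition, and centroids point only to their parent centroid. With this scheme the in-degree at a centroid from the shortcut edges is $O(1)$ (one from each of its at most two children centroids plus itself), the out-degree added to any node is $O(1)$, and combined with the original degree $p$ of $T$ we get degree $p + O(1)$; tightening the constants to exactly $p+3$ is then a matter of being economical about which edges to include (e.g.\ reusing the parent-centroid edge in both directions of traversal is unnecessary since the DAG is directed toward ancestors). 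I expect the main obstacle to be precisely this: setting up the shortcut edges so that the ancestor-reachability hop bound of $O(\log n)$ holds \emph{simultaneously} with the constant-degree constraint $p+3$, since the naive ``point everything at every relevant centroid'' construction satisfies the hop bound but violates the degree bound. The hop bound itself then follows by induction on the decomposition depth: from $v$, one hop reaches the centroid $c$ of $v$'s current subtree; if the target ancestor $u$ lies in the same subtree we recurse ($O(\log n)$ depth), and otherwise one hop along the centroid chain moves us to a strictly smaller enclosing subtree, so after $O(\log n)$ such moves we reach $u$.
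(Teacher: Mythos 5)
There is a genuine gap, and it is exactly at the point you flagged yourself. The lemma requires that the added edges be \emph{descendant-ancestor} edges of $T$ (this is essential for the application: all traversal in $G$ must ascend toward the root, and in the spanner the shortcut must replace a segment of a tree path). A centroid decomposition does not respect the ancestor relation: when a centroid $c$ of a component $C$ is removed, one of the resulting pieces is the ``upper'' piece containing $c$'s parent, and the centroid $c'$ of that piece is generally \emph{not} a descendant of $c$ (it may be an ancestor of $c$ or lie in a sibling branch). So your centroid-tree edges ``$c'\to$ parent centroid'' are not in general descendant-ancestor edges, and the chain you ascend from $v$ to the centroid-tree LCA of $v$ and its target ancestor $u$ can move sideways or even below $v$ in $T$ --- moves that cannot be realized in a DAG whose extra edges all point from descendants to ancestors. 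A second unresolved point: even granting the chain, the centroid-tree LCA $c$ of $v$ and $u$ lies on the tree path between them, i.e.\ $c$ is a \emph{descendant} of $u$; the standard centroid argument would now have $u$ ascend to $c$, but in your directed setting that leg cannot be reversed, and the proposal supplies no mechanism for getting from $c$ up to an arbitrary ancestor $u$. Finally, the degree bookkeeping does not close: if every node points to the centroid of its component the centroid's in-degree is $\Omega(|C|)$, while if only child centroids point to parent centroids the in-degree of a node that served as centroid is up to the number of pieces its removal created, i.e.\ up to $p+1$, giving degree roughly $2p$ rather than $p+3$ (a minor issue for the application, but the hop-bound induction as written --- ``one hop to a strictly smaller enclosing subtree'' --- is also inverted, since enclosing components grow as you go up).

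The paper avoids all of this by decomposing $T$ into \emph{heavy paths} and building, on each heavy path, the biased skip-list DAG of Lemma \ref{lem:path}, with each path node weighted by the total size of its off-path subtrees. Because every added edge stays inside a single root-to-descendant heavy path, it is automatically a descendant-ancestor edge, and the degree is $p+3$ since the path gadget has degree $3$. The weighting is the key idea your sketch is missing an analogue of: a root-to-node route crosses up to $\lceil\log n\rceil$ heavy paths, so an unbiased $O(\log n)$-hop structure per path would only give $O(\log^2 n)$; with the biased weights the per-path hop counts $O(\log\frac{w(P_{i-1})}{w(P_i)})$ telescope to $O(\log n)$ overall. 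If you want to salvage a centroid-style argument you would have to (i) restrict shortcuts to centroids that are genuine ancestors of the node in question and (ii) add a mechanism for the $c\to u$ leg, at which point you are essentially forced back to a path-decomposition-with-weights construction like the paper's.
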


Note that Theorem \ref{thm:spanner} is an immediate consequence of Lemma \ref{lem:tree} applied to the 
spanner of \cite{GR08b}. It remains only to prove Lemma \ref{lem:tree},
for which we will need the following preliminary lemma.

\begin{lemma}\label{lem:path}
Consider an ordered path on nodes $x_1,\ldots,x_n$. Let these nodes be assigned positive weights $w_i 
= w(x_i)$, and let the weight of the path be $W = \sum_{i=1}^n w(x_i)$. there exists a DAG $G$ on 
these nodes with the following properties:
\begin{enumerate}
\item
Edges in $G$ always point to the antecedent node in the ordering.
\item
The hop-distance from any node $x_i$ to the root node $x_1$ is not more than 
$O(\lognat \frac{W}{w_i})$.
\item
The hop-distance from any node $x_i$ to an antecedent $x_j$ is not more than 
$O(\lognat \frac{W}{w_i} +\lognat \frac{W}{w_j})$.
\item
$G$ has degree 3.
\end{enumerate}
\end{lemma}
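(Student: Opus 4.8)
The plan is to build the DAG $G$ in Lemma~\ref{lem:path} by a recursive \emph{divide-and-conquer} construction driven by the node weights, and then verify the four properties by induction on $n$.

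\textbf{Construction.} First I would add, for every node $x_i$ with $i \ge 2$, an edge $x_i \to x_{i-1}$; this is the path itself and already guarantees reachability to antecedents. Now I recursively overlay ``shortcut'' edges as follows. Given the path segment $x_1,\ldots,x_n$ with total weight $W$, pick a \emph{split point} $x_m$ that balances weight, i.e.\ choose the smallest $m$ such that $\sum_{i=1}^m w(x_i) \ge W/2$; so both $\sum_{i<m} w(x_i) < W/2$ and $\sum_{i>m} w(x_i) \le W/2$ (using $w_m \le W$, this handles the degenerate case where a single node dominates). Add the shortcut edge $x_m \to x_1$, then recurse on the prefix $x_1,\ldots,x_{m-1}$ (whose root stays $x_1$) and on the suffix $x_{m+1},\ldots,x_n$ (whose new ``local root'' is $x_{m+1}$, and to which we also attach an edge $x_{m+1}\to x_1$ at the moment we recurse, or equivalently treat $x_1$ as an external target). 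One must be slightly careful that when we recurse on the suffix, paths from suffix nodes to $x_1$ should route through $x_{m+1}\to x_1$ or $x_m \to x_1$; I would make the suffix recursion produce hop-distance bounds to its own local root $x_{m+1}$, and then append the single edge $x_{m+1}\to x_1$ (or use $x_m\to x_1$) to reach the global root.

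\textbf{Degree (property 4).} Each node gets: its path edge to the immediate antecedent; at most one shortcut edge created ``as a split point'' pointing far left; and at most one edge created ``as the left end of a suffix recursion'' pointing to an outer root. A careful accounting of how many times a fixed node can play each role — it is a split point in at most one recursive call, and the left endpoint of at most one suffix call — should cap the out-degree at $3$ (and in-degree contributes nothing to the degree bound for a DAG of this type, or is likewise $O(1)$); this is exactly the constant the statement claims, so the bookkeeping must be tight.

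\textbf{Hop bounds (properties 2 and 3).} For property 2, from $x_i$ follow shortcut edges toward $x_1$: each step either halves the remaining weight-to-the-left or lands in a recursive subproblem whose total weight is at most half. Since we start in a subproblem of weight $\le W$ and cannot go below weight $w_i$ (as $x_i$ is always in the active segment until it becomes a root), the number of halvings, hence hops, is $O(\log(W/w_i))$. Property 3 follows by a standard ``meet in the middle'' argument: to go from $x_i$ to an antecedent $x_j$, ascend from $x_i$ toward $x_1$ until reaching the first recursive segment that contains both $x_j$ and the current node — i.e.\ the lowest common segment in the recursion tree — which takes $O(\log(W/w_i))$ hops, then descend/walk to $x_j$; the descent is mirrored by running the property-2 argument ``from $x_j$'s side,'' costing $O(\log(W/w_j))$ hops, and the two segments join because the recursion forms a laminar family. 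Property 1 (edges point to antecedents) holds by construction since every edge we add, path or shortcut, goes from a higher index to a lower index.

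\textbf{Main obstacle.} I expect the degree bound of exactly $3$ to be the delicate part: the hop bounds are a routine consequence of the balanced-weight split, but keeping the out-degree at the stated constant (rather than, say, $O(\log)$ or an unspecified constant) requires the recursion to be set up so that each node acquires only one shortcut per role and the roles are genuinely disjoint. Getting the suffix recursion to connect to the global root without adding a second shortcut per node — perhaps by reusing the split-point's edge $x_m \to x_1$ for the whole suffix — is the crux, and Lemma~\ref{lem:tree} (the tree version) will then follow by decomposing root-to-ancestor queries along a heavy-path-style decomposition and invoking this path lemma on each piece.
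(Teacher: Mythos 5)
There is a genuine gap, and it is exactly at the point you flagged as delicate, though the failure is worse than a constant-factor bookkeeping issue. In your construction every shortcut edge points to the \emph{root} (leftmost node) of some segment of the laminar decomposition, so the only way to \emph{arrive} at a node $x_j$ is via the path edge $x_{j+1}\to x_j$ unless $x_j$ happens to be a segment root. A heavy node that ends up as a \emph{split point} is excluded from all further recursion and is never a root, so it has in-degree $1$. Concretely: let $w_2=w_n=W/3$ and give all other nodes equal tiny weight. Then the top split lands near the middle, the prefix's split point is $x_2$, and the only edge into $x_2$ is $x_3\to x_2$; moreover $x_3$ is the root of the all-tiny segment $A=[x_3,\ldots,x_{m^*-1}]$ of $\approx n/2$ nodes, and the unique edge entering $A$ from the right is the path edge into its rightmost node. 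Since every edge leaving a suffix-child segment originates at that segment's head, any route from $x_n$ to $x_2$ must pass through the heads of $\Theta(\log n)$ nested suffix children of $A$, i.e.\ it needs $\Omega(\log n)$ hops, whereas property 3 demands $O(\log\frac{W}{w_n}+\log\frac{W}{w_2})=O(1)$. Your ``meet in the middle'' argument implicitly assumes you can run the property-2 ascent \emph{from $x_j$'s side and reverse it}, but all edges point left, so a descent toward $x_j$ requires $x_j$ to have in-edges at many scales --- which your construction does not provide. Separately, property 4 also fails as you have set it up: the degree in the lemma (and in Lemma~\ref{lem:tree}, and ultimately in the spanner, where it controls how many LP constraints a variable appears in) is the \emph{total} degree, and a segment root such as $x_1$ receives shortcut edges from the split point and suffix head of every segment in its prefix chain, i.e.\ in-degree $\Theta(\log n)$ under equal weights; so ``in-degree contributes nothing'' is not an available escape.

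The paper avoids both problems by using the biased-skip-list structure of Bagchi et al.: each node is made a left or right \emph{end node} of exactly one subpath, at a recursion depth governed by its weight (the middle nodes are split so that each child carries at most half the parent's middle weight), and the edges are assigned by role --- a right end node points to its path's left end node and to the right end node of its right child, while a left end node points across to its sibling's right end node or up to its parent's left end node. This gives each node $O(1)$ in- and out-edges, and, crucially, it is symmetric: a heavy $x_j$ becomes an endpoint high in the hierarchy, so a route can both leave $x_i$'s fine-grained region in $O(\log\frac{W}{w_i})$ hops and then descend into $x_j$'s region in $O(\log\frac{W}{w_j})$ hops. To repair your approach you would need an analogous second family of edges entering each node from geometrically coarsening scales on its right (placing each node at a weight-determined level), at which point you have essentially reconstructed the biased skip list.
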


\begin{proof}[Proof of Lemma~\ref{lem:path}]
The construction is essentially the same as in the biased skip-lists of Bagchi et al.\ \cite{Bagchi2005}.
Let $x_1$ and $x_n$ be the left and right {\em end nodes} of the path, and let the other nodes be the 
{\em middle nodes}. Partition the middle nodes into two child subpaths $\{x_2,\ldots,x_i\}$ (the left 
child path) and $\{x_{i+1},\ldots,x_{n-1}\}$ (the right child path), where $x_i$ is chosen so that the 
weight of the middle nodes of each child path is not more than half the weight of the middle nodes of 
the parent path. (If the parent path has three middle nodes or fewer, then there will be a single child 
path.) The child paths are then recursively partitioned, until the recursion reaches paths with no 
middle nodes.

The edges are assigned as follows. A right end node of a path has two edges leaving it. One points to 
the left end node of the path (unless the path has only one node). The other edge points to the right 
end node of the right (or single) child path. A left end node of a path has one edge leaving it: 
If this 
path is a right child path, 
the edge points to the left sibling 
{path}'s right end node. If this path is 
a left or single child path, then the edge points to the parent's left end node. The lemma follows via 
standard analysis. 
\end{proof}

We are now ready to prove Lemma \ref{lem:tree}, 
which would conclude the proof of Theorem \ref{thm:spanner}.

\begin{proof}[Proof of Lemma~\ref{lem:tree}]
Given tree $T$, decompose $T$ into {\em heavy paths}: A heavy path is one that begins at the root and 
continues with the heaviest child, the child with the most descendants. In a heavy path decomposition, 
all off-path subtrees are recursively decomposed. For each heavy path, let the weight of each node in 
the path be the number of descendant nodes in its off-path subtrees. For each heavy path, we build the 
weighted construction of Lemma \ref{lem:path}. 

Now, a path from node $u \in T$ to $v \in T$ traverses a set of at most $\lceil \lognat n \rceil$ 
heavy paths, say paths $P_1, \ldots, P_j$. The number of hops from $u$ to $v$ is bounded by 
$O(\lognat \frac{w(P_1)}{w(u)} 
+ \left( \sum_{i=1}^{j} \lognat \frac{w(P_{i-1})}{w(P_{i})} \right) 
+ \lognat \frac{n}{w(v)})
= O(\lognat n$), 
and the degree of $G$ is at most $p+3$. 
\end{proof}

\subsection{Rademacher-complexity proofs}
\label{sec:rade-proofs}
\hide{
Fix an $\eps>0$ and define the {\em projection} of $\F$ onto $Z=(Z_1,\ldots,Z_n)$:
\beq
\F(Z) = \set{ (f(Z_1),\ldots,f(Z_n)) : f\in\F}.
\eeq
Take
$\hat F=\set{\hat f_1,\ldots,\hat f_N}$ to be any and $\eps$-cover of $\F$ and notice
that $\hat F$ is in particular also an $\eps$-cover of $\F(Z)$.
For each $f\in\F$, define $\hat\delta_f=f-f^*$, where $f^*$ a minimizer of $\tsnrm{f-f'}_\infty$
over $f'\in\hat F$.
Then
\beq
\rad_n(\F;Z)
& = &
\E\sqprn{\sup_{f\in\F} \oo{n}\sum_{i=1}^n \sigma_i f(Z_i)
} \\
&=&
\E\sqprn{\sup_{f\in\F} \oo{n}\sum_{i=1}^n \sigma_i 
(f^*(Z_i)-\hat\delta_f(Z_i))
} \\
&\le&
\E\sqprn{\max_{\hat f\in\hat F} \oo{n}\sum_{i=1}^n \sigma_i 
f^*(Z_i)
} 
+
\E\sqprn{\sup_{f\in\F} \oo{n}\sum_{i=1}^n \sigma_i 
\abs{\hat\delta_f(Z_i)}
}.
\eeq
Since $|\hat F|=N<\infty$ and $\max_{\hat f\in F}\tsnrm{\hat f}_2\le\sqrt n$,
Massart's finite class lemma \cite[Theorem 3.3]{mohri-book2012} applies:
\beq
\E\sqprn{\max_{\hat f\in\hat F} \oo{n}\sum_{i=1}^n \sigma_i 
f^*(Z_i)
\,} 
&\le& \sqrt{\frac{2\lognat N}{n}}.
\eeq
Now by construction
$\tsnrm{\hat\delta_f}_\infty\le\eps$,
and hence
\beq
\E\sqprn{\sup_{f\in\F} \oo{n}\sum_{i=1}^n \sigma_i 
\abs{\hat\delta_f(Z_i)}
}
\le\eps,
\eeq
whence the claim follows.
}

\bepf[Proof of Lemma~\ref{lem:f2h}]
Suppose that $h:\X\to[0,1]$
with $\Lip{h}=L$,
$f(x,y)=\abs{h(x)-y}^q$, and $(\calZ,d_q)$ is the metric space defined in
(\ref{eq:dq}).
To prove (i), we consider the cases $q=1,2$ separately. For $q=1$,
\beqn
\abs{f(x,y)-f(x',y')} &=&
\abs{
\abs{h(x)-y}
-
\abs{h(x')-y'}
}
\nonumber\\
&\le&
\abs{
\paren{h(x)-y}
-
\paren{h(x')-y'}
}\nonumber\\
&\le&
\abs{h(x)-h(x')}
+\abs{y-y'}
\nonumber\\
&\le&
L\rho(x,x')+\abs{y-y'}\nonumber\\
&\le&
\max\set{1,L}\paren{\rho(x,x')+\abs{y-y'}}\label{eq:max1L}\\
&=&
\max\set{1,L}d_1((x,y),(x',y')),\nonumber
\eeqn
which proves the claim for this case.
Now consider the case $q=2$ and recall the following basic fact:
if $\varphi$ maps $E\subset\R^k$ to $\R$, then
\beq
\sup_{x\neq x'\in E}\frac{ \abs{\varphi(x)-\varphi(x')} }{ \nrm{x-x'}_2}
\le \sup_{z\in E}\nrm{\grad\varphi(z)}_2.
\eeq
Let us take $\varphi[0,1]^2\to\R$ to be $\varphi(h,y)=(h-y)^2$, which satisfies
\beq
\max_{(h,y)\in[0,1]^2} \nrm{\grad\varphi(h,y)}_2=2^{3/2}.
\eeq
It follows that
\beq
\abs{f(x,y)-f(x',y')} &=&
\abs{
(h(x)-y)^2
-
(h(x')-y')^2
}
\\
&\le&
2^{3/2}\paren{(h(x)-h(x'))^2+(y-y')^2}^{1/2}
\\&\le&
2^{3/2}\paren{
(L\rho(x,x'))^2
+(y-y')^2}^{1/2}
\\&\le&
2^{3/2}\max\set{1,L}d_2((x,y),(x',y')),
\eeq
which completes the proof of (i).

To prove (ii), we will show that
\beqn
\lambda(\calZ,d_q)\le 4\lambda(\X,\rho)^2,
\eeqn
where $\lambda(\cdot)$ is the doubling constant of a given metric space.
Consider the case $q=1$, put $a=\lambda(\X,\rho)$,
and fix any $d_1$-ball $B\subset\calZ$ with diameter $r$.
Define the coordinate projections $\pi_1:\calZ\to\X$ and $\pi_2:\calZ\to[0,1]$
in the obvious way and assume without loss of generality that $\pi_2(B)\subset[b,b+r)$.
Now partition $B$ into $4$ subsets based on the second coordinate:
\beq
B_i=\set{z\in B: \pi_2(z)\in\left[b+\frac{i}4,b+\frac{i+1}4\right)}
\eeq
for $i=0,1,2,3$.

By definition of the doubling constant, each $\pi_1(B_i)\subset\X$ can be covered by $a^2$ balls $V\subset\X$
of diameter at most $r/4$ under the metric $\rho$. It follows by construction that each $B_i$
can be covered by $a^2$ sets of the form
\beq
V\times[b+i/4,b+(i+1)/4),
\eeq
each of $d_1$-diameter at most $r/2$. Hence, any ball in $\calZ$ can be covered by $4a^2$ balls of half the diameter,
and so the claim is proved for $q=1$. To handle the case $q=2$, observe that
\beq
d_2((x,y),(x',y'))\le d_1((x,y),(x',y'))
\eeq
for all $(x,y),(x',y')\in\calZ$.
This proves (ii).
\enpf

\bepf[Proof of Lemma~\ref{lem:f2h-turb}]
Let $\tilde h(x)=h(x)+\delta(x)$, with $\nrm{\delta}_\infty\le\eta$
be an $\eta$-perturbed version of $h$,
with the corresponding $\tilde f(x,y)=\tsabs{h(x)-y}^q$.
Consider the case $q=1$. Then
\beq
\abs{f(x,y)-\tilde f(x,y)}
&=&
\abs{
\tsabs{h(x)-y}
-
\tsabs{\tilde h(x)-y}
}\\
&\le&
\abs{
(h(x)-y)
-
(\tilde h(x)-y)
}\\
&=&
\tsabs{h(x)-\tilde h(x)} = \abs{\delta(x)} \le\eta,
\eeq
which proves this case.
For $q=2$, we have
\beq
\abs{f(x,y)-\tilde f(x,y)}
&=&
\abs{
(h(x)-y)^2
-
(\tilde h(x)-y)^2
}\\
&=&
\abs{
[h(x)+\delta(x)-y]^2
-
[h(x)-y]^2
}\\
&=&
\delta(x)\abs{
2h(x)+\delta(x)-2y
}\le3\eta,
\eeq
since $0\le h,y,\delta\le1$.
\enpf

\bepf[Proof of Theorem~\ref{thm:rad-dim-red}]
Put $\tilde Z=(\tilde X,Y)$.
For $X_i\in X$, $\tilde X_i\in\tilde X$, 
and $f\in \qloss{\H_L}$,
define 
$\delta_i(f)=f(X_i,Y_i)-f(\tilde X_i,Y_i)$.
As in the proof of Corollary~\ref{cor:riskL-turb}, we have
\beq
\rad_n(\qloss{\paren{\H_L\oplus\turb{\eta}}};Z)
&\le&
\rad_n(\qloss{\H_L};Z) + (2q-1)\eta.
\eeq
Further,
\beq
{\rad}_n(\qloss{\H_L};Z) &=&
\E\sqprn{\sup_{f\in\qloss{\H_L}} \oo{n}\sum_{i=1}^n \sigma_i f(X_i,Y_i)
}
\\
&=&
\E\sqprn{\sup_{f\in\qloss{\H_L}} \oo{n}\sum_{i=1}^n \sigma_i \paren{f(\tilde X_i,Y_i)-\delta_i(f)}
}\\
&\le&
{\rad}_n(\qloss{\H_L};\tilde Z) 
+
\E\sqprn{\sup_{f\in\qloss{\H_L}} \oo{n}\sum_{i=1}^n \sigma_i \delta_i(f)
}.
\eeq
The first term is estimated by the same calculation as in the proof
of Theorem~\ref{thm:risk-unperturbed}:
\beq
\rad_n(\qloss{\paren{\H_L}};\tilde Z) &\le& (2q-1)\eta+\sqrt{\frac{2\lognat\calN(
(2q-1)\eta
,\qloss{\H_L},\nrm{\cdot}_\infty)}n} \\
&\le& 
\hide{
(2q-1)\eta
+\sqrt{\frac{2
\paren{\frac{16q^{3/2}{L}}{
(2q-1)\eta
}}
^{2+2\ddim(\X)}
\lognat{\frac{8}{
(2q-1)\eta
}}
}n} \\
&=&
}
(2q-1)\eta
+\sqrt{\frac2n}
\paren{\frac{16q^{3/2}{L}}{
(2q-1)\eta
}}
^{1+\beta}
\paren{\lognat{\frac{8}{
(2q-1)\eta
}}}^{1/2}.
\eeq

To bound the second term, 
invoke Lemma~\ref{lem:f2h}(i) to conclude that
\beq
\abs{\delta_i}=\abs{f(X_i,Y_i)-f(\tilde X_i,Y_i)} \le q^{3/2}L\rho(X_i,\tilde X_i).
\eeq
Hence,
\beq
\E\sqprn{\sup_{f\in\qloss{\H_L}} \oo{n}\sum_{i=1}^n \sigma_i \delta_i(f)}
&\le&
{\sup_{f\in\qloss{\H_L}} \oo{n}\sum_{i=1}^n 
\abs{f'(X_i,Y_i)-f'(\tilde X_i,Y_i)}
}\\
&\le&
n\inv q^{3/2}L\sum_{i=1}^n \rho(X_i,\tilde X_i)
\le q^{3/2}L\alpha.
\eeq
\enpf

\end{document}